\def\eqref#1{equation~\ref{#1}}
\def\1{\bm{1}}
\def\va{{\bm{a}}}
\def\vo{{\bm{o}}}
\def\vs{{\bm{s}}}
\def\vz{{\bm{z}}}
\DeclareMathAlphabet{\mathsfit}{\encodingdefault}{\sfdefault}{m}{sl}
\SetMathAlphabet{\mathsfit}{bold}{\encodingdefault}{\sfdefault}{bx}{n}
\newcommand{\E}{\mathbb{E}}
\definecolor{Tdgreen}{rgb}{0,0.4,0.7}
\newcommand{\Skip}[1]{}
\definecolor{gray}{rgb}{0.5, 0.5, 0.5}
\theoremstyle{plain}
\newtheorem{theorem}{Theorem}[section]
\theoremstyle{definition}
\theoremstyle{remark}
\title{DreamSmooth: Improving Model-based Reinforcement Learning via Reward Smoothing}
\author{Vint Lee, Pieter Abbeel, Youngwoon Lee \\
University of California, Berkeley}
\begin{document}

\maketitle

\begin{abstract}
    Model-based reinforcement learning (MBRL) has gained much attention for its ability to learn complex behaviors in a sample-efficient way: planning actions by generating imaginary trajectories with predicted rewards. Despite its success, we found that surprisingly, reward prediction is often a bottleneck of MBRL, especially for sparse rewards that are challenging (or even ambiguous) to predict. Motivated by the intuition that humans can learn from rough reward estimates, we propose a simple yet effective reward smoothing approach, \textit{DreamSmooth}, which learns to predict a temporally-smoothed reward, instead of the exact reward at the given timestep. We empirically show that DreamSmooth achieves state-of-the-art performance on long-horizon sparse-reward tasks both in sample efficiency and final performance without losing performance on common benchmarks, such as Deepmind Control Suite and Atari benchmarks. 
\end{abstract}

\section{Introduction}
\label{sec:introduction}

\begin{wrapfigure}{r}{0.55\textwidth}
    \vspace{-1em}
    \includegraphics[width=0.55\textwidth]{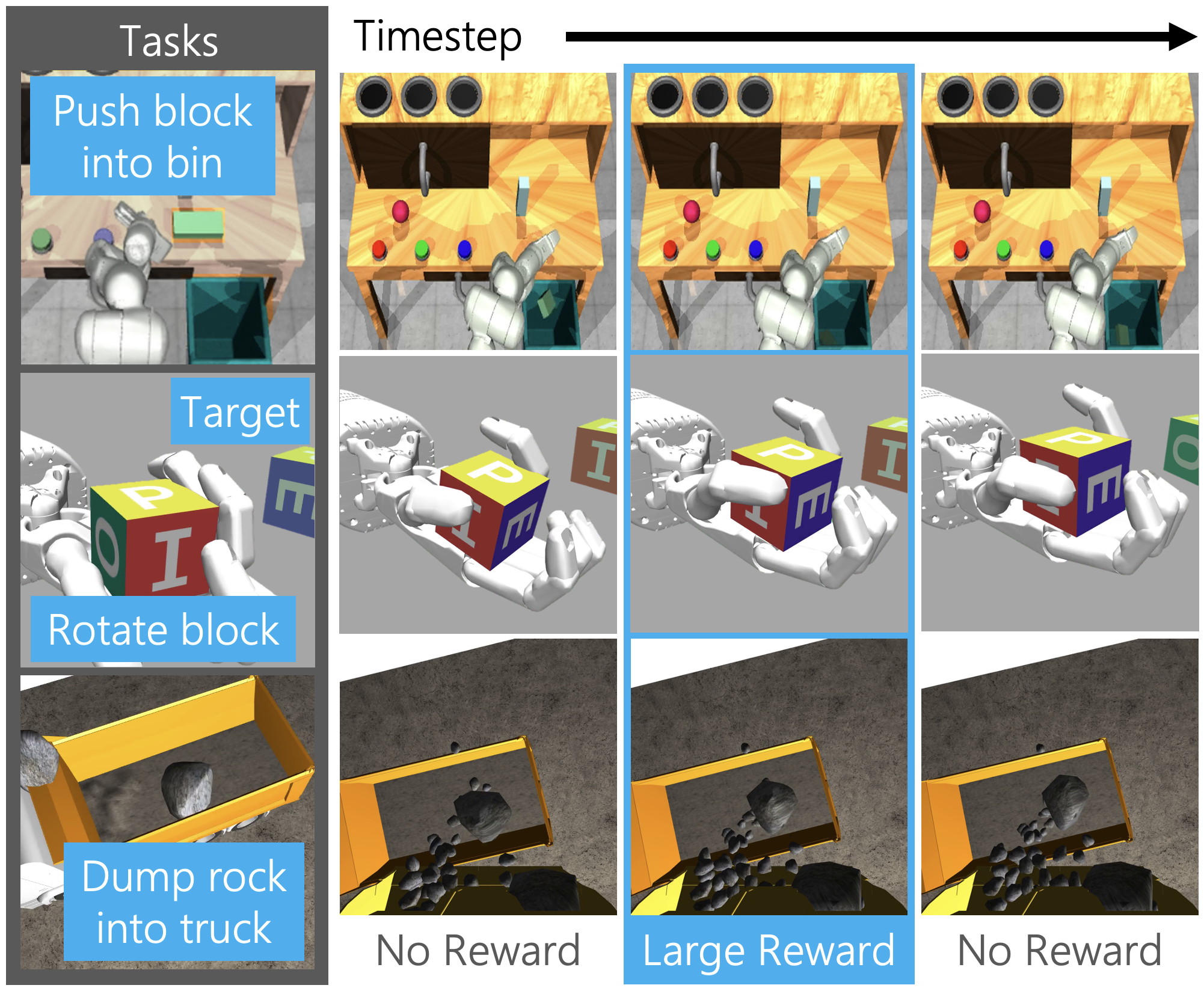}
    \caption{Predicting the exact sequence of rewards is extremely difficult. These examples show the sequences of image observations seen by the agent just before and after it receives a large reward. There is little to visually distinguish timesteps with a large reward from those without, which creates a significant challenge for reward prediction.}
    \label{fig:teaser}
    \vspace{-1.5em}
\end{wrapfigure}

Humans often plan actions with a rough estimate of future rewards, instead of the exact reward at the exact moment~\citep{fiorillo2008temporal, klein2011dissociable}. A rough reward estimate is mostly sufficient to learn a task, and predicting the exact reward is often challenging since it can be ambiguous, delayed, or not observable. Consider for instance the manipulation task illustrated in \Cref{fig:teaser}~(middle) of pushing a block on a table into a bin, where a large reward is given only on the timestep when the block first touches the bin. Using the same image observations as the agent, it is challenging even for humans to predict the correct sequence of rewards. Crucially, this issue is present in many environments, where states with no reward are almost indistinguishable from those with rewards.

An accurate reward model is vital to model-based reinforcement learning (MBRL) -- reward estimates that are too high will cause an agent to choose actions that perform poorly in reality, and estimates that are too low will lead an agent to ignore high rewards. Despite its difficulty and importance, the reward prediction problem in MBRL has been largely overlooked. We find that even for the state-of-the-art MBRL algorithm, DreamerV3~\citep{hafner2023mastering}, reward prediction is not only challenging, but is also a performance bottleneck for many tasks. For instance, DreamerV3 fails to predict any reward for most objectives in the Crafter environment~\citep{hafner2022benchmarking} with similar failure modes observed on variants of the RoboDesk~\citep{kannan2021robodesk} and Shadow Hand~\citep{plappert2018multi} tasks. 

Inspired by the human intuition that only a rough estimate of rewards is sufficient, we propose a simple yet effective solution, \textbf{DreamSmooth}, which learns to predict a temporally-smoothed reward rather than the exact reward at each timestep. This makes reward prediction much easier -- instead of having to predict rewards exactly, now the model only needs to produce an estimate of when large rewards are obtained, which is sufficient for policy learning. 

Our experiments demonstrate that while extremely simple, this technique significantly improves performance of different MBRL algorithms on many sparse-reward environments. Specifically, we find that for DreamerV3~\citep{hafner2023mastering}, TD-MPC~\citep{hansen2022temporal}, and MBPO~\citep{janner2019mbpo}, our technique is especially beneficial in environments with the following characteristics: sparse rewards, partial observability, and stochastic rewards. Finally, we show that even on benchmarks where reward prediction is not a significant issue, DreamSmooth does not degrade performance, which indicates that our technique can be universally applied.

\section{Related Work}
\label{sec:related_work}

Model-based reinforcement learning (MBRL) leverages a dynamics model (i.e. world model) of an environment and a reward model of a desired task to plan a sequence of actions that maximize the total reward. The dynamics model predicts the future state of the environment after taking a specific action and the reward model predicts the reward corresponding to the state-action transition. With the dynamics and reward models, an agent can simulate a large number of candidate behaviors in imagination instead of in the physical environment, allowing MBRL to tackle many challenging tasks~\citep{silver2016mastering, silver2017mastering, silver2018general}.

Instead of relying on the given dynamics and reward models, recent advances in MBRL have enabled learning a world model of high-dimensional observations and complex dynamics~\citep{ha2018world, schrittwieser2020mastering, hafner2019dream, hafner2020mastering, hafner2023mastering, hansen2022temporal}, as well as a temporally-extended world model~\citep{shi2022skimo}. Specifically, DreamerV3~\citep{hafner2023mastering} has achieved the state-of-the-art performance across diverse domains of problems, e.g., both with pixel and state observations as well as both with discrete and continuous actions. 

For realistic imagination, MBRL requires an accurate world model. There have been significant efforts in learning better world models by leveraging human videos~\citep{mendonca2023structured}, by adopting a more performant architecture~\citep{deng2023facing}, and via representation learning, such as prototype-based~\citep{deng2022dreamerpro} and object-centric~\citep{singh2021structured} state representations, contrastive learning~\citep{okada2021dreaming}, and masked auto-encoding~\citep{seo2022masked, seo2023multi}.

However, compared to the efforts on learning a better world model, learning an accurate reward model has been largely overlooked. \citet{babaeizadeh2020models} investigates the effects of various world model designs and shows that reward prediction is strongly correlated to task performance when trained on an offline dataset, while limited to dense-reward environments.
In this paper, we point out that accurate reward prediction is crucial for MBRL, especially in sparse-reward tasks and partially observable environments, and propose a simple method to improve reward prediction in MBRL. 
\section{Approach}
\label{sec:approach}

The main goal of this paper is to understand how challenging reward prediction is in model-based reinforcement learning (MBRL) and propose a simple yet effective solution, \textit{reward smoothing}, which makes reward prediction easier to learn. In this section, we first provide a background about MBRL in \Cref{sec:approach:background}, then present experiments demonstrating the challenge of predicting sparse reward signals in \Cref{sec:approach:reward_prediction}, and finally explain our approach, DreamSmooth, in \Cref{sec:approach:reward_smoothing}.

\subsection{Background}
\label{sec:approach:background}

We formulate a problem as a partially observable Markov decision process (POMDP), which is defined as tuple $(\mathcal{O}, \mathcal{A}, P, R, \gamma)$. $\mathcal{O}$ is an observation space, $\mathcal{A}$ is an action space, $P(\vo_{t+1} \vert \vo_{\leq t}, \va_{\leq t})$ with timestep $t$ is a transition dynamics, $R$ is a reward function that maps previous observations and actions to a reward $r_{t} = R(\vo_{\leq t}, \va_{\leq t})$, and $\gamma \in [0, 1)$ is a discount factor~\citep{sutton2018reinforcement}. RL aims to find a policy $\pi(\va_{t} \,|\, \vo_{\leq t}, \va_{<t})$ that maximizes the expected sum of rewards $\mathbb{E}_{\pi}[\sum_{t=1}^{T}\gamma^{t-1} r_{t}]$.

This paper focuses on MBRL algorithms that learn a world model $P_\theta(\vz_{t+1} \vert \vz_t, \va_t)$ and reward model $R_\theta(r_t \vert \vz_t)$ from agent experience, where $\vz_t$ is a learned latent state at timestep $t$. The learned world model and reward model can then generate imaginary rollouts $\{\vz_\tau, \va_\tau, r_\tau\}_{\tau=t}^{t+H-1}$ of the horizon $H$ starting from any $\vz_t$, which can be used for planning~\citep{argenson2021modelbased, hansen2022temporal} or policy optimization~\citep{ha2018world, hafner2019dream}. Specifically, we use the state-of-the-art algorithms, DreamerV3~\citep{hafner2023mastering} and TD-MPC~\citep{hansen2022temporal}.

DreamerV3~\citep{hafner2023mastering} uses the predicted rewards for computing new value targets to train the critic. For learning a good policy, the reward model plays a vital role since the critic, from which the actor learns a policy, receives its training signal exclusively through the reward model. Note that the data collected from the environment is only used for training a world model and reward model.

On the other hand, TD-MPC~\citep{hansen2022temporal} learns a state-action value function $Q(\vz_t, \va_t)$ directly from agent experience, not from predicted rewards. However, the reward model is still important for obtaining a good policy in TD-MPC because the algorithm uses both the reward model and value function to obtain the policy through online planning.

\subsection{Reward Prediction is Difficult}
\label{sec:approach:reward_prediction}

\begin{figure}[b!]
    \vspace{-1em}
    \centering
    \begin{subfigure}[t]{0.82\textwidth}
        \includegraphics[width=\textwidth]{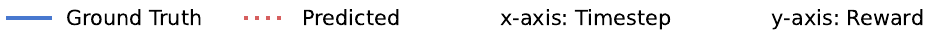}
    \end{subfigure}
    \\
    \begin{subfigure}[t]{0.49\textwidth}
        \includegraphics[width=\textwidth]{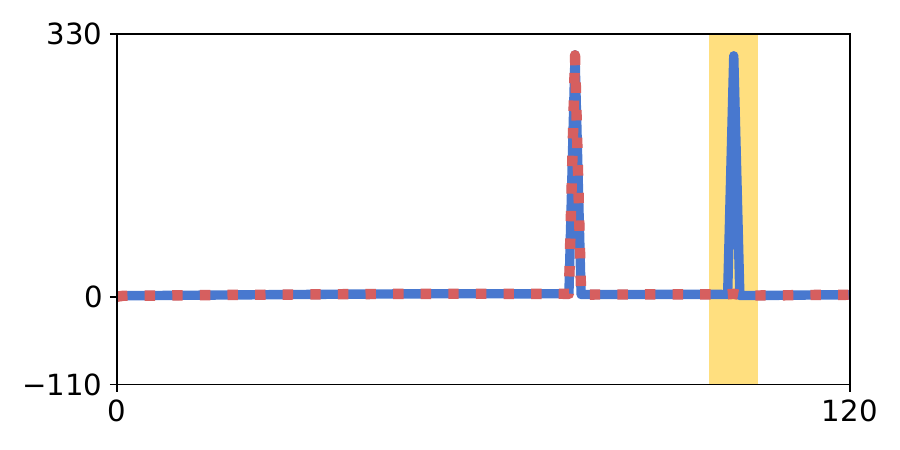}
        \vspace{-2em}
        \caption{RoboDesk}
        \label{fig:ep-reward-desk}
    \end{subfigure}
    \begin{subfigure}[t]{0.49\textwidth}
        \includegraphics[width=\textwidth]{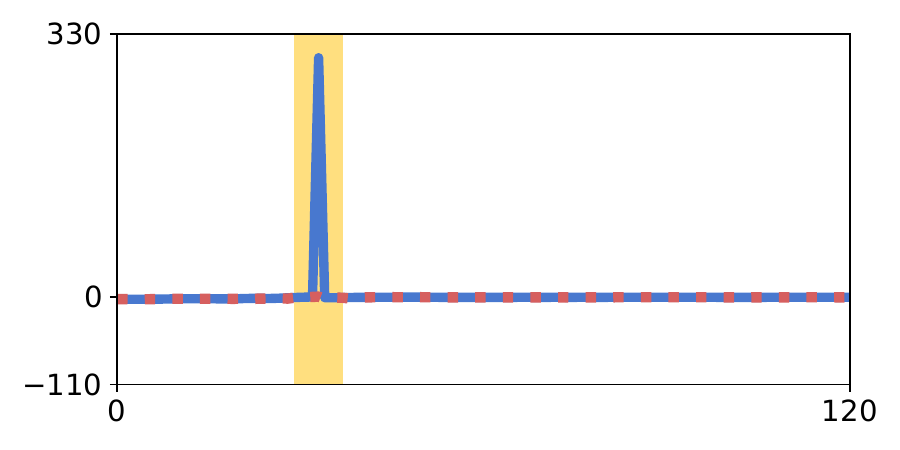} 
        \vspace{-2em}
        \caption{Hand}
        \label{fig:ep-reward-hand}
    \end{subfigure}
    \\
    \begin{subfigure}[t]{0.49\textwidth}
        \includegraphics[width=\textwidth]{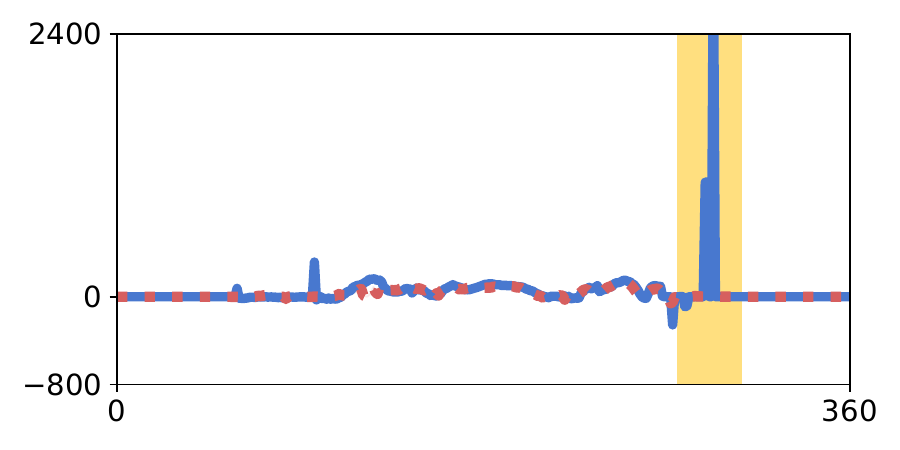}
        \vspace{-2em}
        \caption{Earthmoving}
        \label{fig:ep-reward-agx}
    \end{subfigure}
    \begin{subfigure}[t]{0.49\textwidth}
        \includegraphics[width=\textwidth]{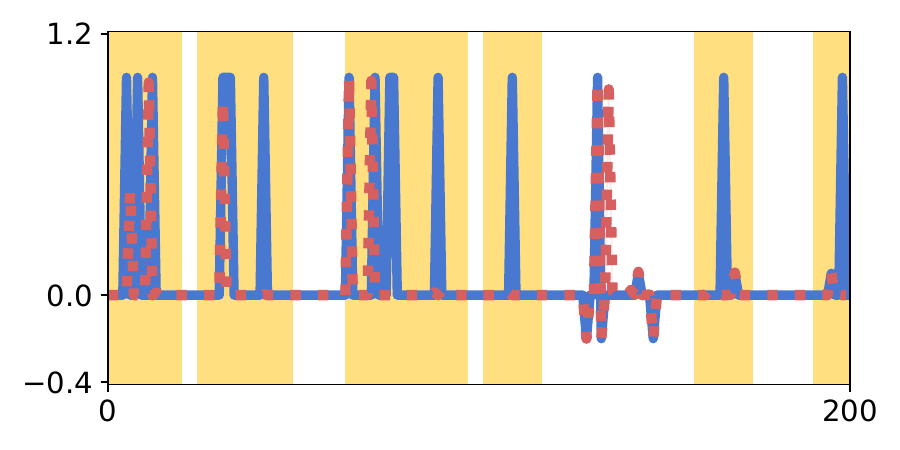}
        \vspace{-2em}
        \caption{Crafter}
        \label{fig:ep-reward-crafter}
    \end{subfigure}
    \vspace{-0.5em}
    \caption{Ground truth rewards and DreamerV3's predicted rewards over an evaluation episode. The reward model misses many \textit{sparse} rewards, which is highlighted in yellow.}
    \label{fig:ep-reward}
\end{figure}

Reward prediction is surprisingly challenging in many environments. \Cref{fig:teaser} shows sequences of frames right before and after sparse rewards are received in diverse environments. Even for humans, it is difficult to determine the exact timestep when the reward is received in all three environments. 

We hypothesize that the mean squared error loss $\mathbb{E}_{(\vz, r) \sim \mathcal{D}}[(R_\theta(\vz) -r)^2]$, typically used for reward model training, deteriorates reward prediction accuracy when there exist \textit{sparse rewards}. This is because predicting a sparse reward a single step earlier or later results in a higher loss than simply predicting $0$ reward at every step. Thus, instead of trying to predict sparse rewards at the exact timesteps, a reward model minimizes the loss by entirely omitting sparse rewards from its predictions. 

To verify this hypothesis, we plot the ground-truth and DreamerV3's predicted rewards in \Cref{fig:ep-reward}. On the four tasks described in \Cref{sec:experiments/tasks}, the reward models struggle at predicting exact rewards and simply ignore sparse rewards unless they are straightforward to predict. This hypothesis also holds in a deterministic and fully-observable environment, Crafter, which has $24$ sources of sparse rewards. The reward model fails to predict most of these reward sources (\Cref{fig:ep-reward-crafter}). 

The difficulty of reward prediction can be further exacerbated by partial observability, ambiguous rewards, or stochastic dynamics of environments. As an example in the first (third) row in \Cref{fig:teaser}, the sparse rewards are given when the block (the rocks in the third example) first contacts the bin (the dumptruck). The exact moment of contact is not directly observable from the camera viewpoint, and this makes reward prediction ambiguous. Moreover, stochastic environment dynamics, e.g., contact between multiple rocks, can make predicting a future state and reward challenging.

\subsection{Reward Prediction is a Bottleneck of MBRL}

The preceding section shows that reward prediction is challenging in many environments. More importantly, this poor reward prediction can be a bottleneck of policy learning, as shown in \Cref{fig:failure_modes}. In RoboDesk, where the reward model does not reliably detect the completion of the second task (\Cref{fig:ep-reward-desk}), the policy gets stuck at solving the first task and fails on subsequent tasks. In Earthmoving, where the reward model cannot capture rewards for successful dumping (\Cref{fig:ep-reward-agx}), the policy frequently drops the rocks outside the dumptruck. These consistent failure modes in reward prediction and policy learning in DreamerV3 suggest that poor reward prediction can be a bottleneck of MBRL.

\begin{figure}[ht]
    \centering
    \begin{subfigure}[t]{0.495\textwidth}
        \centering
        \includegraphics[width=0.39\textwidth]{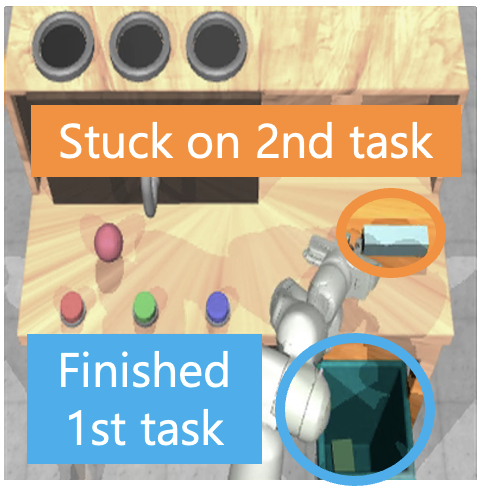}
        \ 
        \includegraphics[width=0.55\textwidth]{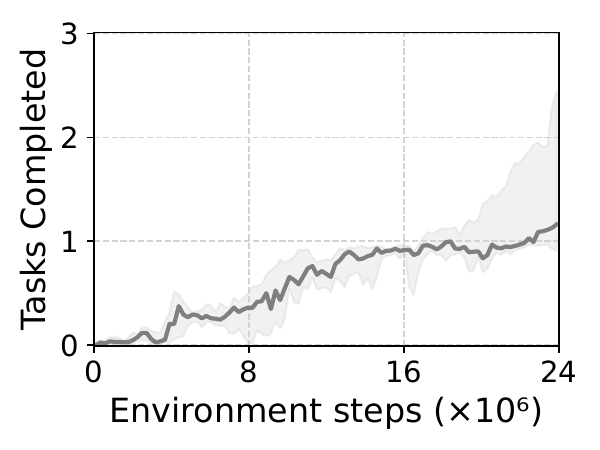}
        \caption{RoboDesk}
        \label{fig:desk-task-fail}
    \end{subfigure}
    \hfill
    \begin{subfigure}[t]{0.495\textwidth}
        \centering
        \includegraphics[width=0.39\textwidth]{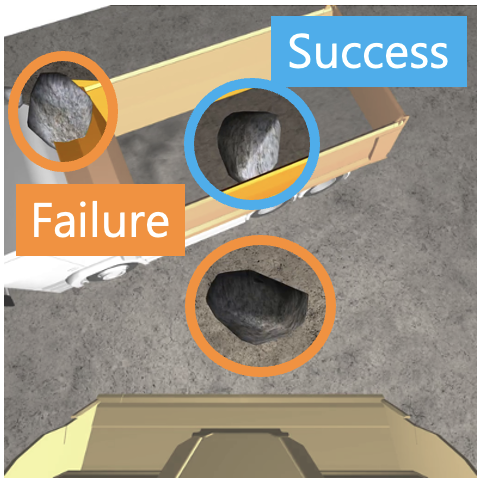}
        \ 
        \includegraphics[width=0.55\textwidth]{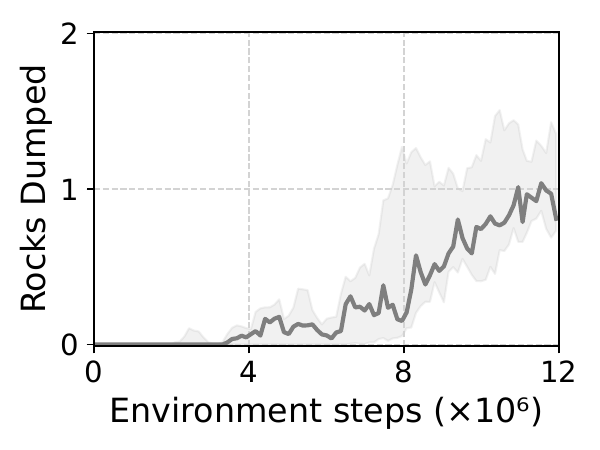}
        \caption{Earthmoving}
        \label{fig:agx-dump-fail}
    \end{subfigure}
    \caption{The reward model's inability to predict sparse rewards for completing tasks leads to poor task performance. (a)~In RoboDesk, the agent gets stuck after learning the first task, and is unable to learn to perform the subsequent tasks. (b)~In Earthmoving, the policy often fails to dump the rocks accurately into the dumptruck. The learning curves are averaged over $3$ seeds.
    \vspace{-1em}
    }
    \label{fig:failure_modes}
\end{figure}

\subsection{DreamSmooth: Improving MBRL via Reward Smoothing}
\label{sec:approach:reward_smoothing}

\begin{wrapfigure}{r}{0.53\textwidth}
    \vspace{-1em}
    \centering
    \begin{subfigure}[t]{0.17\textwidth}
        \includegraphics[width=\textwidth]{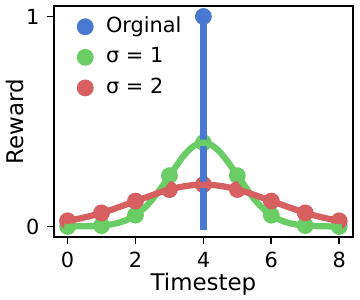}
        \caption{Gaussian}
        \label{fig:smoothing-gaussian}
    \end{subfigure}
    \hfill
    \begin{subfigure}[t]{0.17\textwidth}
        \includegraphics[width=\textwidth]{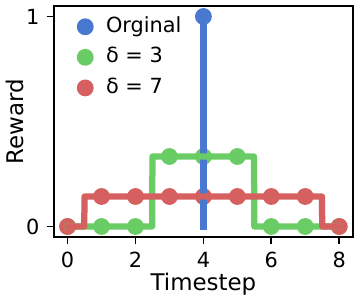}
        \caption{Uniform}
        \label{fig:smoothing-uniform}
    \end{subfigure}
    \hfill
    \begin{subfigure}[t]{0.17\textwidth}
        \includegraphics[width=\textwidth]{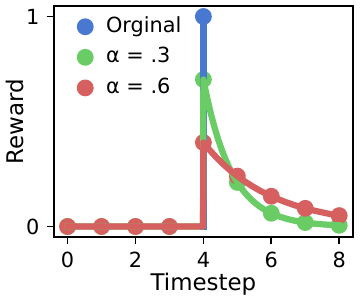}
        \caption{EMA}
        \label{fig:smoothing-ema}
    \end{subfigure}
    \caption{Reward smoothing on sparse reward $1$ at $t=4$. $\sigma$, $\delta$, and $\alpha$ are smoothing hyperparameters.}
    \label{fig:smoothing}
    \vspace{-1em}
\end{wrapfigure}

To address the reward prediction problem, we propose a simple yet effective solution, \textbf{DreamSmooth}, which relaxes the requirement for the model to predict sparse rewards at the exact timesteps by performing temporal smoothing. Allowing the reward model to predict rewards that are off from the ground truth by a few timesteps makes learning easier, especially when rewards are ambiguous or sparse. 

Specifically, DreamSmooth applies temporal smoothing to the rewards upon collecting each new episode. DreamSmooth can work with any smoothing function $f$ that preserves the sum of rewards: 
\begin{equation}
\label{eq:smoothing}
    \tilde{r}_t \leftarrow f(r_{t-L:t+L}) = \sum_{i=-L}^{L} f_i \cdot r_{\text{clip}(t+i, 0, T)} \quad s.t. \quad \sum_{i=-L}^{L} f_i = 1,
\end{equation}
where $T$ and $L$ denote the episode and smoothing horizons, respectively. For simplicity, we omit the discount factor in \Cref{eq:smoothing}; the full equation can be found in Appendix, \Cref{eq:smoothing_appendix}. Episodes with the smoothed rewards are stored in the replay buffer and used to train the reward model. The agent learns only from the smoothed rewards, without ever seeing the original rewards. The smoothed rewards ease reward prediction by allowing the model to predict rewards several timesteps earlier or later, without incurring large losses.
In this paper, we investigate three popular smoothing functions: Gaussian, uniform, and exponential moving average (EMA) smoothing, as illustrated in \Cref{fig:smoothing}.

While the main motivation for smoothing is to make it easier to learn reward models, we note that reward smoothing in some cases preserves optimality -- \textit{an optimal policy under smoothed rewards $\tilde{r}$ is also optimal under the original rewards $r$}. In particular, we provide a proof in \Cref{sec:proofs} for the optimality of EMA smoothing (and any smoothing function where $\forall i>0, f_i = 0$) by augmenting the POMDP states with the history of past states. However, when future rewards are used for smoothing (e.g. Gaussian smoothing), the smoothed rewards are conditioned on policy, and we can no longer define an equivalent POMDP. In such cases, there is no theoretical guarantee. Even so, we empirically show that reward models can adapt their predictions alongside the changing policy, and achieve performance improvements.

\Skip{Please note that DreamSmooth aims to provide a useful heuristic for MBRL, which implements the idea of learning from roughly estimated rewards rather than exact rewards predicted at the exact moments. Reward shaping via reward smoothing guarantees the optimal policy given the original reward $R$ by preserving the total reward. Especially, in \Cref{sec:proofs}, we provide a proof for the optimality of EMA smoothing under the assumption that a state includes the history of past states, which can be naturally addressed by recurrent state models. On the other hand, as in Gaussian smoothing, a smoothed reward function may require future rewards, which are conditioned on the current policy; so is the reward model. In such cases, there is no theoretical guarantee; but in our experiments, we empirically show that reward models can adapt their predictions along the changes in policies and thus, improve MBRL.}

The implementation of DreamSmooth is extremely simple, requiring only one additional line of code to existing MBRL algorithms, as shown in \Cref{alg:dreamsmooth}. The overhead of reward smoothing is minimal, with time complexity $O(T \cdot L)$. More implementation details can be found in \Cref{sec:implementation_details}.

\begin{algorithm}
    \caption{\textsc{collect\_rollout}\,($\pi$: policy, $\mathcal{D}$: replay buffer) in \textsc{DreamSmooth}}
    \label{alg:dreamsmooth}
    \begin{algorithmic}
        \State $\{(\vo_t, \va_t, r_t)_{t=1}^{T}\} \leftarrow \textsc{rollout}(\pi)$ 
        \smallskip
        \State $\color{red} \{r_t\}_{t=1}^{T} \leftarrow \textsc{gaussian}(\{r_t\}_{t=1}^{T}, \sigma) ~~ \texttt{or} ~~ \textsc{ema}(\{r_t\}_{t=1}^{T}, \alpha)$ \Comment{\textcolor{red}{\textbf{only one} line needs to be added.}} \smallskip
        \State $\mathcal{D} \gets \mathcal{D} \cup \{(\vo_t, \va_t, r_t)_{t=1}^{T}\}$
    \end{algorithmic}
\end{algorithm}

\section{Experiments}
\label{sec:experiments}

In this paper, we propose a simple reward smoothing method, DreamSmooth, which facilitates reward prediction in model-based reinforcement learning (MBRL) and thus, improves the performance of existing MBRL methods. Through our experiments, we aim to answer the following questions: (1)~Does reward smoothing improve reward prediction? (2)~Does better reward prediction with reward smoothing lead to better sample efficiency and asymptotic performance of MBRL in sparse-reward tasks? (3)~Does MBRL with reward smoothing also work in common dense-reward tasks?

\begin{figure}[b]
    \centering
    \begin{subfigure}[t]{0.16\textwidth}
        \includegraphics[width=\textwidth]{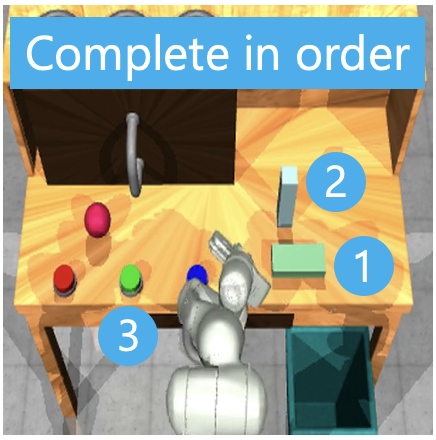}
        \caption{RoboDesk}
        \label{fig:tasks-robodesk}
    \end{subfigure}
    \hfill
    \begin{subfigure}[t]{0.16\textwidth}
        \includegraphics[width=\textwidth]{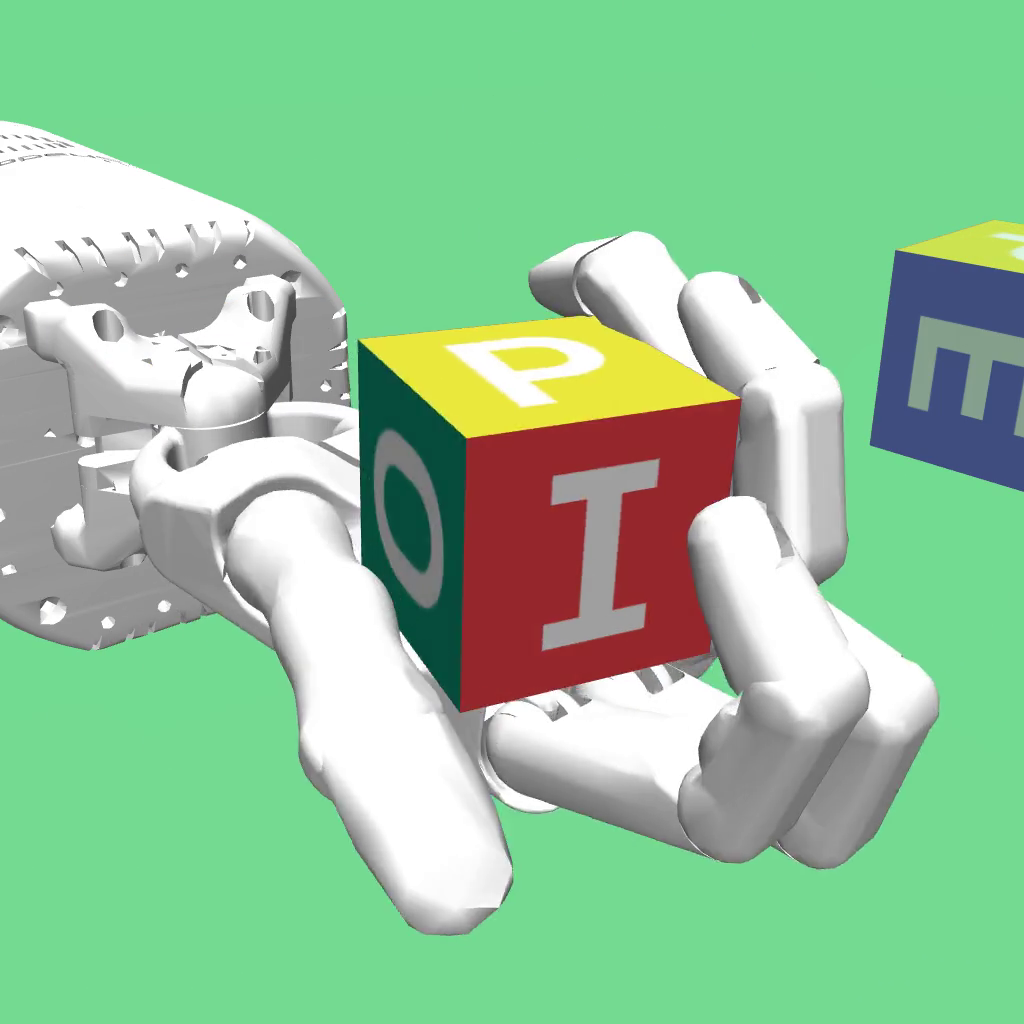}
        \caption{Hand}
        \label{fig:tasks-hand}
    \end{subfigure}
    \hfill
    \begin{subfigure}[t]{0.16\textwidth}
        \includegraphics[width=\textwidth]{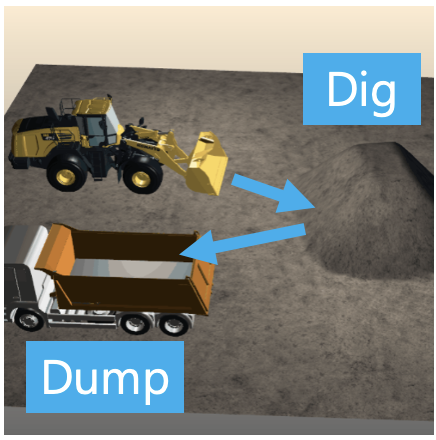}
        \caption{Earthmoving}
        \label{fig:tasks-agx}
    \end{subfigure}
    \hfill
    \begin{subfigure}[t]{0.16\textwidth}
        \includegraphics[width=\textwidth]{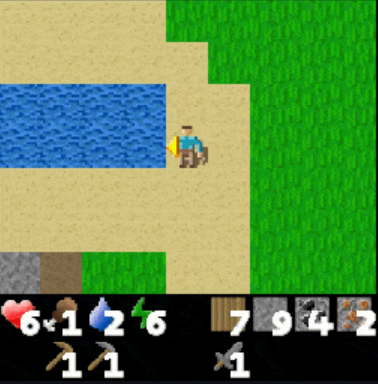}
        \caption{Crafter}
        \label{fig:tasks-crafter}
    \end{subfigure}
    \hfill
    \begin{subfigure}[t]{0.16\textwidth}
        \includegraphics[width=\textwidth]{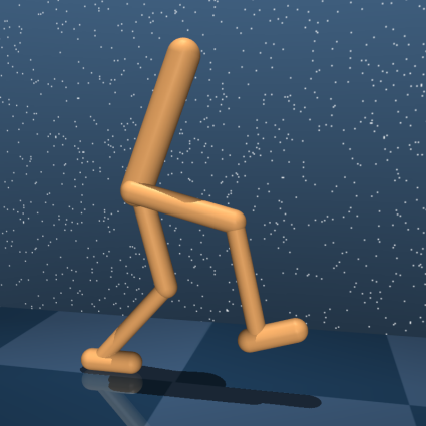}    
        \caption{DMC}
        \label{fig:tasks-dmc}
    \end{subfigure}
    \hfill
    \begin{subfigure}[t]{0.16\textwidth}
        \includegraphics[width=\textwidth]{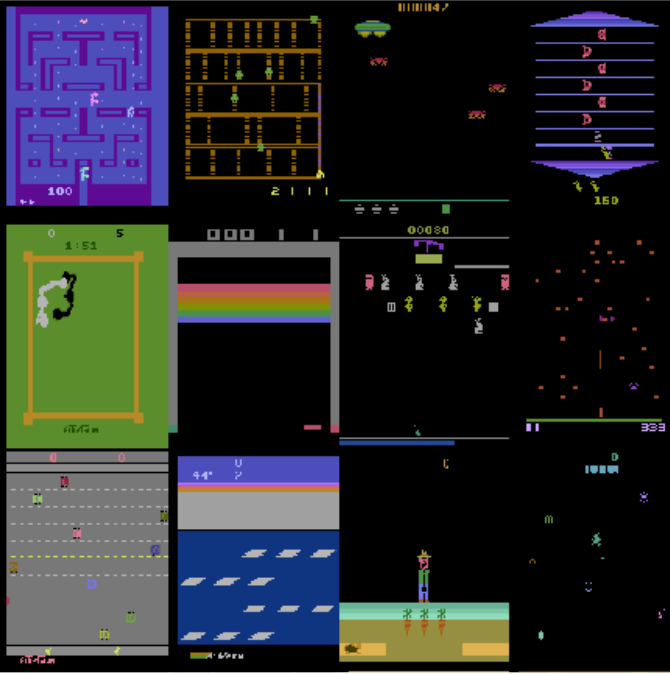}    
        \caption{Atari}
        \label{fig:tasks-atari}
    \end{subfigure}
    \caption{We evaluate DreamSmooth on four tasks with sparse subtask completion rewards (a-d). We also test on two popular benchmarks, (e)~DeepMind Control Suite and (f)~Atari.}
    \label{fig:tasks}
\end{figure}

\subsection{Tasks}
\label{sec:experiments/tasks}

We evaluate DreamSmooth on four tasks with sparse subtask completion rewards and two common RL benchmarks. Earthmoving uses two $64\times64$ images as an observation while all other tasks use a single image. See \Cref{sec:environment_details} for environment details.

\begin{itemize}[leftmargin=1em]
    \item \textbf{RoboDesk:} We use a modified version of RoboDesk~\citep{kannan2021robodesk}, where a sequence of manipulation tasks (\texttt{flat\_block\_in\_bin, upright\_block\_off\_table, push\_green}) need to be completed in order (\cref{fig:tasks-robodesk}). We use the original dense rewards together with a large sparse reward for each task completed.

    \item \textbf{Hand:} The Hand task~\citep{plappert2018multi} requires a Shadow Hand to rotate a block in hand into a specific orientation. We extend it to achieve a sequence of pre-defined goal orientations in order. In addition to the original dense rewards, we provide a large sparse reward for each goal. 

    \item \textbf{Earthmoving:} The agent controls a wheel loader to pick up rocks and dump them in the dump truck (\cref{fig:tasks-agx}). Sparse rewards are given for picking up and dumping rocks, with dense rewards for moving rocks towards the dump truck. The environment is simulated using the AGX Dynamics physics engine~\citep{agx} with the AGX Terrain module~\citep{servin2021multiscale}.

    \item \textbf{Crafter:} Crafter~\citep{hafner2022benchmarking} is a minecraft-like 2D environment, where the agent tries to collect, place, and craft items in order to survive. There are $22$ achievements in the environment (e.g.~collecting water, mining diamonds) with a sparse reward $1$ for obtaining each achievement for the first time. A small reward is given (or lost) for each health point gained (or lost).

    \item \textbf{DMC:} We benchmark $7$ DeepMind Control Suite continuous control tasks~\citep{tassa2018deepmind}.

    \item \textbf{Atari:} We benchmark $6$ Atari tasks~\citep{bellemare13arcade} at $100$K steps.
\end{itemize}

\begin{figure}[t]
    \centering
    \begin{subfigure}[t]{\textwidth}
        \centering
        \includegraphics[width=\textwidth]{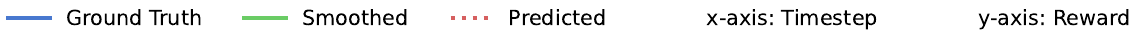}
    \end{subfigure}
    \vspace{-1.5em}
    \\
    \begin{subfigure}[t]{0.49\textwidth}
        \centering
        \includegraphics[width=\textwidth]{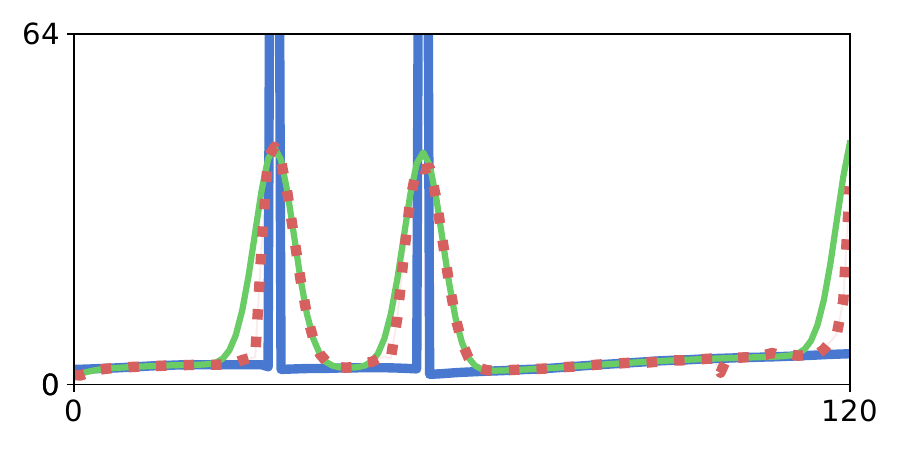}
        \vspace{-2em}
        \caption{RoboDesk}
        \label{fig:ep_reward_smoothed_robodesk}
    \end{subfigure}
    \begin{subfigure}[t]{0.49\textwidth}
        \centering
        \includegraphics[width=\textwidth]{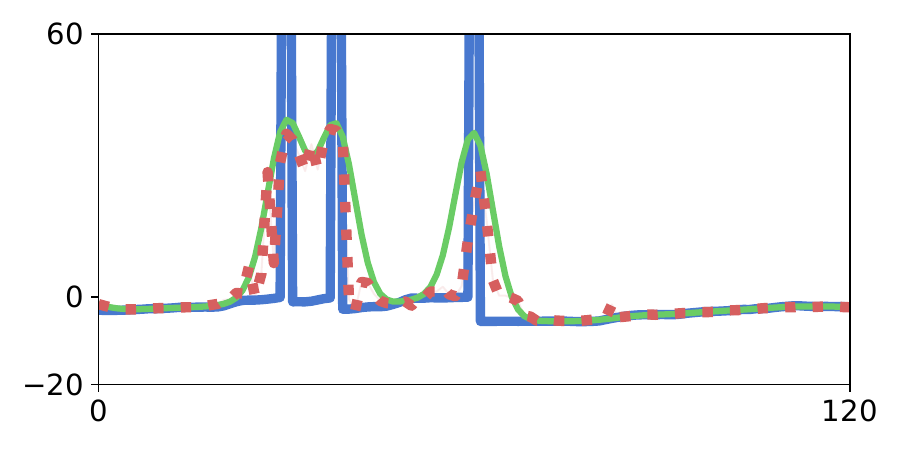}
        \vspace{-2em}
        \caption{Hand}
        \label{fig:ep_reward_smoothed_hand}
    \end{subfigure}
    \\
    \begin{subfigure}[t]{0.49\textwidth}
        \centering
        \includegraphics[width=\textwidth]{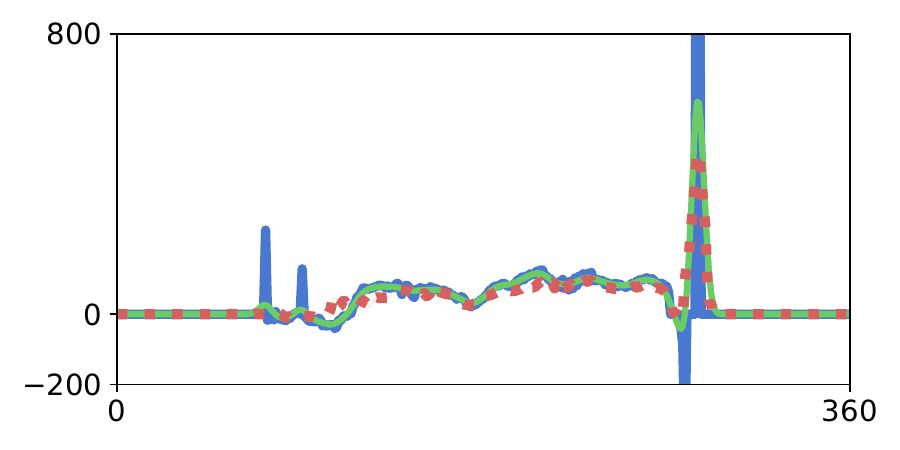}
        \vspace{-2em}
        \caption{Earthmoving}
        \label{fig:ep_reward_smoothed_agx}
    \end{subfigure}
    \begin{subfigure}[t]{0.49\textwidth}
        \centering
        \includegraphics[width=\textwidth]{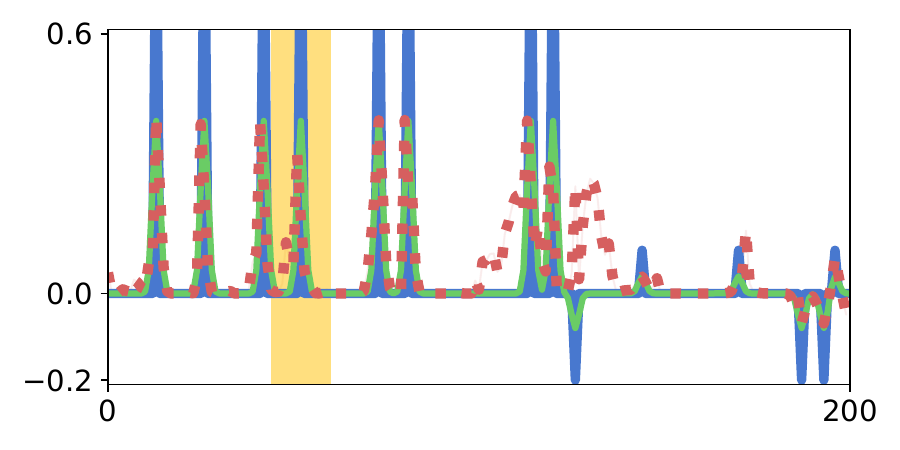}
        \vspace{-2em}
        \caption{Crafter}
        \label{fig:ep_reward_smoothed_crafter}
    \end{subfigure}
    \vspace{-0.5em}
    \caption{We visualize the ground truth rewards, smoothed rewards with Gaussian smoothing, and predicted rewards by DreamerV3 trained on the smoothed rewards over an evaluation episode. In contrast to \Cref{fig:ep-reward}, the reward models with reward smoothing capture most of sparse rewards.}
    \label{fig:ep_reward_smoothed}
\end{figure}

\subsection{Improved Reward Prediction with Reward Smoothing}
\label{sec:experiments:reward_smoothing}

\begin{wrapfigure}{r}{0.48\textwidth}
    \vspace{-1.5em}
    \centering
    \includegraphics[width=0.45\textwidth]{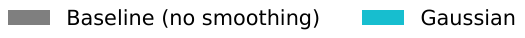}
    \vspace{-0.3em}
    \\
    \includegraphics[width=0.48\textwidth]{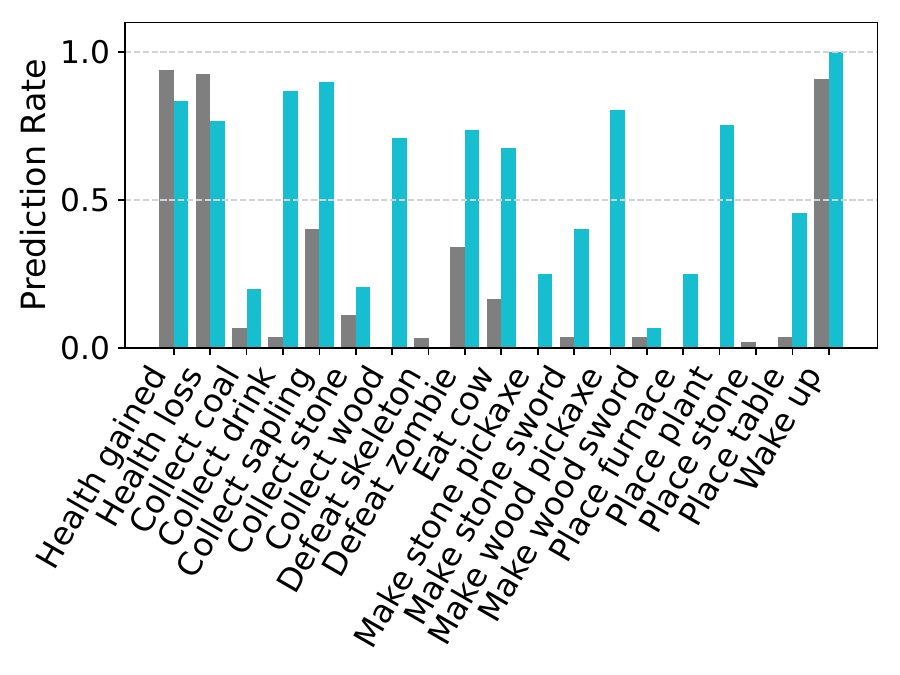}
    \vspace{-2em}
    \caption{Reward prediction rates for $19$ achievements in Crafter. The other $3$ tasks have been never achieved by both methods. With reward smoothing, the prediction rates are better in $15/19$ tasks.}
    \label{fig:crafter-prediction-rate}
    \vspace{-3em}
\end{wrapfigure}

We first visualize the ground truth rewards, smoothed rewards (Gaussian smoothing), and reward prediction results of DreamerV3 trained with DreamSmooth in \Cref{fig:ep_reward_smoothed}. We observe that reward smoothing leads to a significant improvement in reward prediction: DreamSmooth successfully predicts most of the (smoothed) sparse rewards and no longer omits vital signals for policy learning or planning.

The improvement is especially notable in Crafter. In \Cref{fig:crafter-prediction-rate}, we measure the accuracy of the reward model, (i.e. predicting a reward larger than half of the original or smoothed reward for DreamerV3 and DreamSmooth respectively) at the exact timesteps for each subtask. The vanilla DreamerV3's reward model (baseline) misses most of the sparse rewards while DreamSmooth predicts sparse rewards more accurately in $15/19$ subtasks.

\begin{figure}[t]
    \centering
    \begin{subfigure}[t]{0.94\textwidth}
        \includegraphics[width=\textwidth]{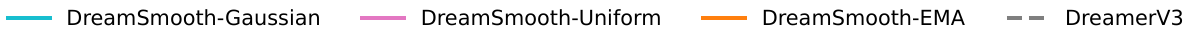}
    \end{subfigure}
    \\
    \begin{subfigure}[t]{0.32\textwidth}
        \centering
        \includegraphics[width=\textwidth]{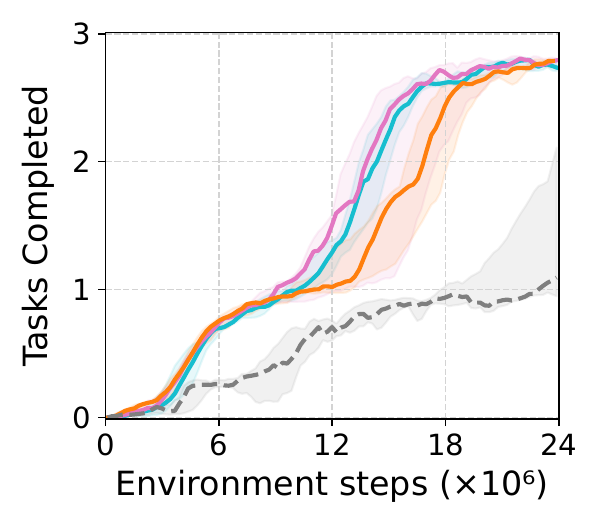}
        \vspace{-1.5em}
        \caption{RoboDesk}
        \label{fig:main_results_robodesk}
    \end{subfigure}
    \hfill
    \begin{subfigure}[t]{0.32\textwidth}
        \centering
        \includegraphics[width=\textwidth]{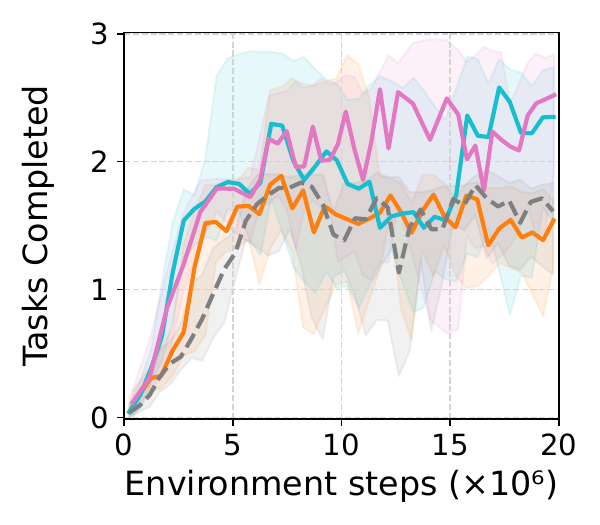}
        \vspace{-1.5em}
        \caption{Hand}
        \label{fig:main_results_hand}
    \end{subfigure}
    \hfill
    \begin{subfigure}[t]{0.32\textwidth}
        \centering
        \includegraphics[width=\textwidth]{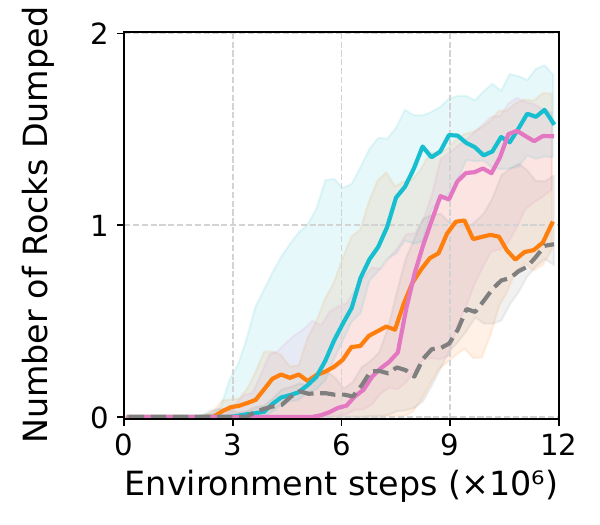}
        \vspace{-1.5em}
        \caption{Earthmoving}
        \label{fig:main_results_agx}
    \end{subfigure}
    \\
    \begin{subfigure}[t]{0.32\textwidth}
        \centering
        \includegraphics[width=\textwidth]{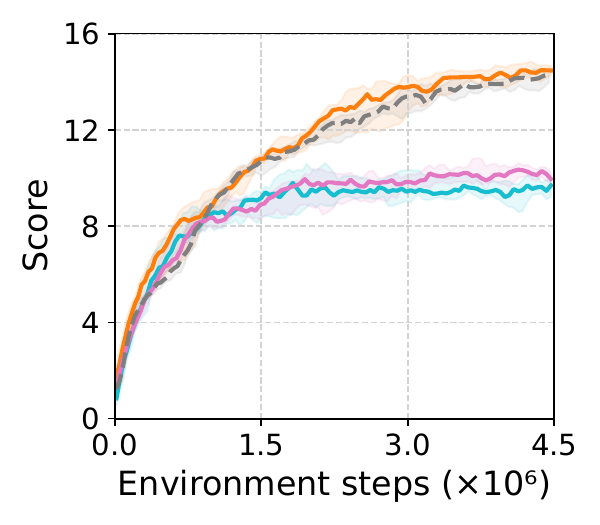}
        \vspace{-1.5em}
        \caption{Crafter}
        \label{fig:main_results_crafter}
    \end{subfigure}
    \hfill
    \begin{subfigure}[t]{0.32\textwidth}
        \centering
        \includegraphics[width=\textwidth]{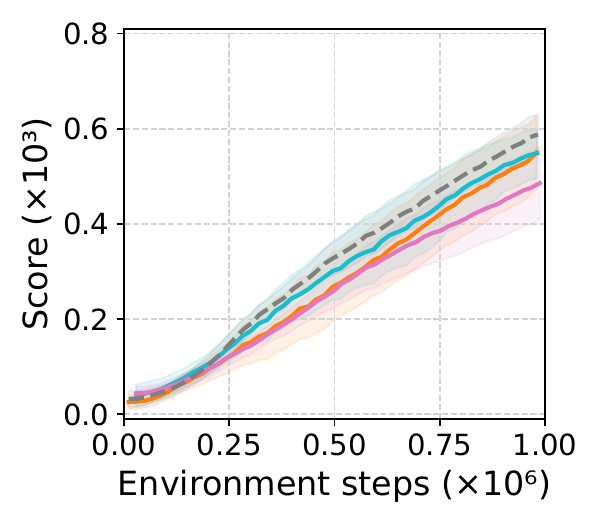}
        \vspace{-1.5em}
        \caption{DMC}
        \label{fig:main_results_dmc}
    \end{subfigure}
    \hfill
    \begin{subfigure}[t]{0.32\textwidth}
        \centering
        \includegraphics[width=\textwidth]{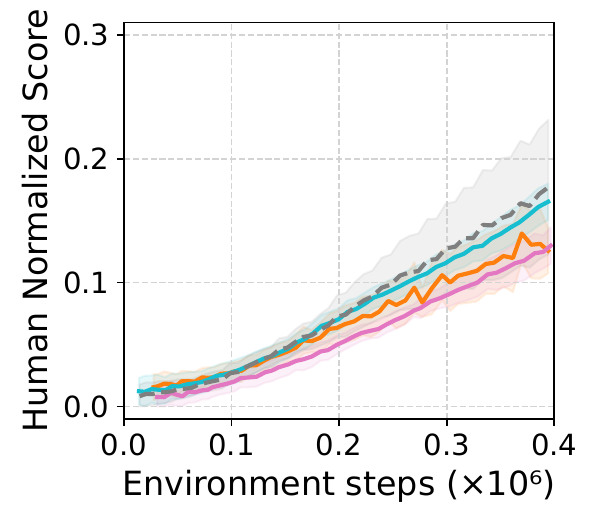}
        \vspace{-1.5em}
        \caption{Atari}
        \label{fig:main_results_atari}
    \end{subfigure}
    \caption{Comparison of learning curves of DreamSmooth (Gaussian, Uniform, EMA) and DreamerV3. The shaded regions in (a-d) show the maximum and minimum over $3$ seeds. For DMC~(e) and Atari~(f), we aggregate results over $7$ and $6$ tasks respectively, and display the standard deviation.}
    \label{fig:main_results}
    \vspace{-2em}
\end{figure}

\subsection{Results}
\label{sec:experiments:results}

We compare the vanilla DreamerV3~\citep{hafner2023mastering} with DreamSmooth, whose backbone is also DreamerV3. For DreamSmooth, we evaluate Gaussian, uniform, and EMA smoothing. The hyperparameters for DreamerV3 and smoothing functions can be found in \Cref{sec:implementation_details}. As shown in \Cref{fig:main_results}, DreamSmooth-Gaussian and DreamSmooth-Uniform significantly improve the performance as well as the sample efficiency of DreamerV3 on the Robodesk, Hand, and Earthmoving tasks. The only change between DreamerV3 and ours is the improved reward prediction, as shown in \Cref{sec:experiments:reward_smoothing}. This result suggests that \textit{reward prediction is one of major bottlenecks of the MBRL performance}.

While all smoothing methods lead to improvements over DreamerV3, Gaussian smoothing generally performs the best, except on Crafter, with uniform smoothing showing comparable performance. The better performance of Gaussian and uniform smoothing could be because it allows predicting rewards both earlier and later, whereas EMA smoothing only allows predicting rewards later.

Despite the improved reward prediction accuracy, DreamSmooth-Gaussian and DreamSmooth-Uniform perform worse than the baseline in Crafter. This could be because the symmetric Gaussian and Uniform smoothing kernels require the reward models to anticipate future rewards, while EMA smoothing does not. We believe this leads to more false-positive reward predictions from the former, leading to poor policy learning in Crafter. More details can be found in \Cref{sec:crafter}.

We also observe that on the DMC and Atari benchmarks, where reward prediction is not particularly challenging, our technique shows comparable performance with the unmodified algorithms (see Appendix, \Cref{fig:full_results} for full results), suggesting that reward smoothing can be applied generally.

\begin{wrapfigure}{r}{0.56\textwidth}
    \vspace{-1em}
    \centering
    \begin{subfigure}[t]{0.52\textwidth}
        \includegraphics[width=\textwidth]{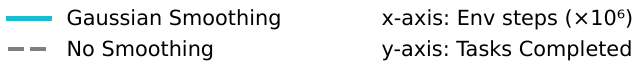}
    \end{subfigure}
    \vspace{-0.1em}
    \\
    \begin{subfigure}[t]{0.18\textwidth}
        \includegraphics[width=\textwidth]{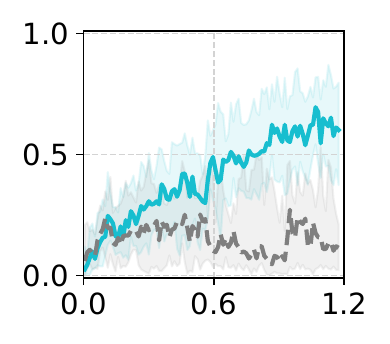}
        \vspace{-1.5em}
        \caption{TD-MPC (Pixel)}
        \label{fig:tdmpc_hand_pixel}
    \end{subfigure}
    \hfill
    \begin{subfigure}[t]{0.18\textwidth}
        \includegraphics[width=\textwidth]{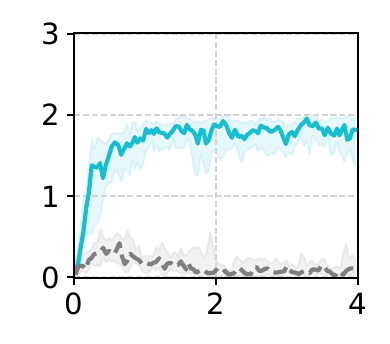}
        \vspace{-1.5em}
        \caption{TD-MPC (State)}
        \label{fig:tdmpc_hand_state}
    \end{subfigure}
    \hfill
    \begin{subfigure}[t]{0.18\textwidth}
        \includegraphics[width=\textwidth]{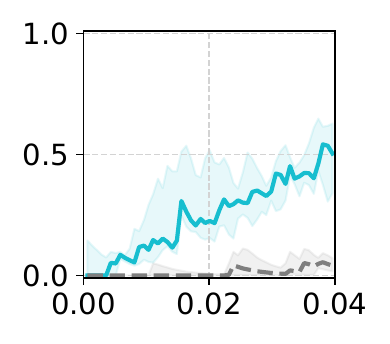}
        \vspace{-1.5em}
        \caption{MBPO (State)}
        \label{fig:mbpo_hand_state}
    \end{subfigure}
    \vspace{-0.5em}
    \caption{Learning curves (median over 3 seeds) for TD-MPC and MBPO, with and without DreamSmooth, on the Hand task.}
    \label{fig:tdmpc_learning_curves}
    \vspace{-1em}
\end{wrapfigure}

In \Cref{fig:tdmpc_learning_curves}, DreamSmooth also improves the performance of TD-MPC~\citep{hansen2022temporal} and MBPO~\citep{janner2019mbpo}. In the Hand task, the vanilla algorithms are unable to consistently solve the first task, even with proprioceptive state observations. However, DreamSmooth enables both algorithms to complete the tasks, even learning on pixel observations with TD-MPC. This suggests that DreamSmooth can be useful in a broad range of MBRL algorithms that use a reward model. We only demonstrate the Hand task since TD-MPC and MBPO fail on other sparse-reward tasks, with MBPO requiring demonstrations to make progress on the Hand task (see \Cref{sec:mbpo} for details).

\subsection{Ablation Studies}
\label{sec:experiments:ablation}

\begin{wrapfigure}{r}{0.43\textwidth}
    \vspace{-1.5em}
    \includegraphics[width=0.43\textwidth]{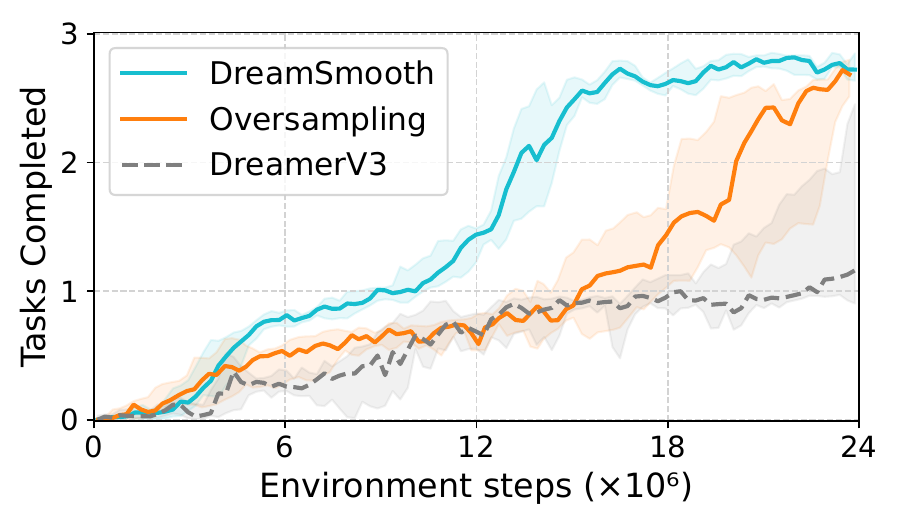}
    \caption{Oversampling sparse rewards ($p=0.5$) improves DreamerV3 on RoboDesk, but still performs worse than DreamSmooth-Gaussian. The lines show median over 3 seeds, while shaded regions show maximum and minimum.}
    \label{fig:priority-replay}
    \vspace{-2em}
\end{wrapfigure}

\paragraph{Data Imbalance.} 
One possible cause of poor reward predictions is data imbalance -- the reward model trains on few examples of sparse rewards due to their infrequency, potentially leading to poor predictions. To test this hypothesis, we conducted experiments with oversampling: with probability $0.5$, we sample a sequence containing sparse rewards; otherwise, we sample uniformly from all sequences in the buffer. As shown in \Cref{fig:priority-replay}, oversampling performs better than the baseline, but learns slower than DreamSmooth. This suggests that while data imbalance contributes to the difficulty of reward prediction, it is not the only factor hindering performance. Furthermore, this oversampling method requires domain knowledge about which reward signals are to be oversampled while DreamSmooth is agnostic to the scale and frequency of sparse rewards.

\begin{wrapfigure}{r}{0.5\textwidth}
    \vspace{-1.5em}
    \centering
    \begin{subfigure}[t]{0.43\textwidth}
        \includegraphics[width=\textwidth]{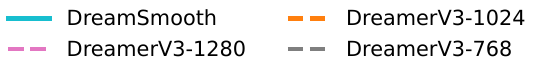}
    \end{subfigure}
    \\
    \vspace{-0.4em}
    \begin{subfigure}[t]{0.24\textwidth}
        \includegraphics[width=\textwidth]{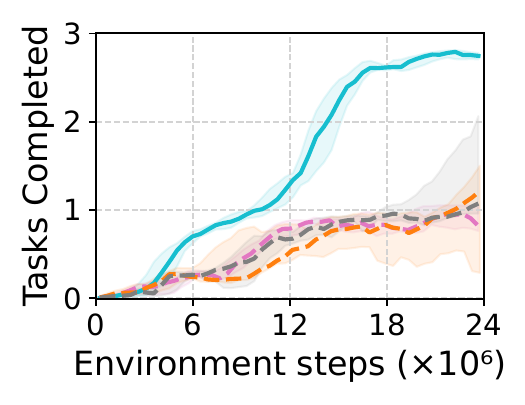}
        \vspace{-1.5em}
        \caption{RoboDesk}
    \end{subfigure}
    \begin{subfigure}[t]{0.24\textwidth}
        \includegraphics[width=\textwidth]{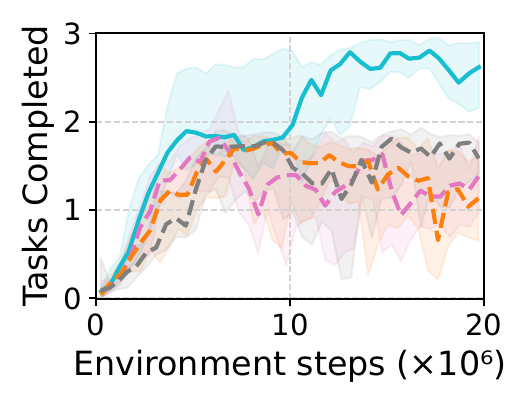}
        \vspace{-1.5em}
        \caption{Hand}
    \end{subfigure}
    \vspace{-0.5em}
    \caption{Simply increasing the reward model size has negligible impact on performance. DreamerV3-$768$, $1024$, and $1280$ use $4$, $5$, $6$ layers of $768$, $1024$, $1280$ units, respectively.}
    \label{fig:rewmodel-size}
    \vspace{-2em}
\end{wrapfigure}

\paragraph{Reward Model Size.}
Another hypothesis for poor reward predictions is that the reward model does not have enough capacity to capture sparse rewards. To test this hypothesis, we increase the size of the reward model from $4$ layers of $768$ units, which DreamSmooth uses, to $5$ layers of $1024$ units and $6$ layers of $1280$ units, while keeping the rest of the world model the same. We observe in \Cref{fig:rewmodel-size} that without smoothing, increasing reward model size has negligible impact, and DreamSmooth outperforms all the reward model sizes tested. This indicates that the reward prediction problem is not simply caused by insufficient model capacity.

\begin{wrapfigure}{r}{0.62\textwidth}
    \vspace{-1.5em}
    \centering
    \begin{subfigure}[t]{0.52\textwidth}
        \includegraphics[width=\textwidth]{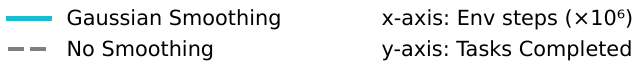}
    \end{subfigure}
    \vspace{-0em}
    \\
    \begin{subfigure}[t]{0.2\textwidth}
        \includegraphics[width=\textwidth]{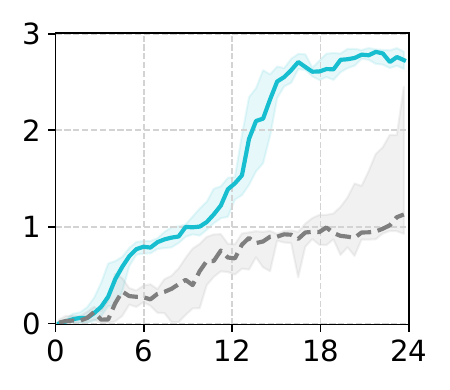}
        \vspace{-1.5em}
        \caption{2-Hot}
    \end{subfigure}
    \hfill
    \begin{subfigure}[t]{0.2\textwidth}
        \includegraphics[width=\textwidth]{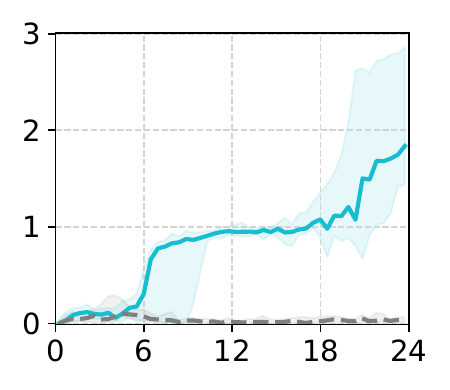}
        \vspace{-1.5em}
        \caption{L1}
    \end{subfigure}
    \hfill
    \begin{subfigure}[t]{0.2\textwidth}
        \includegraphics[width=\textwidth]{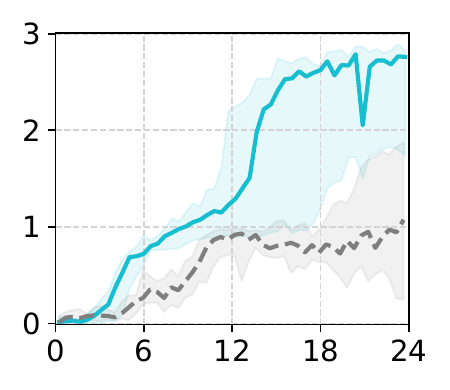}
        \vspace{-1.5em}
        \caption{L2}
    \end{subfigure}
    \vspace{-0.5em}
    \caption{Learning curves (median over 3 seeds) of various loss functions for reward modeling on RoboDesk. Note that DreamerV3 and Dreamsmooth use the 2-Hot loss function.}
    \label{fig:loss_ablation_main}
    \vspace{-0.5em}
\end{wrapfigure}

\paragraph{Loss Functions.}
We verify whether other formulations of reward regression could solve the reward prediction problem in RoboDesk. Following \citet{hafner2023mastering}, we take symlog of the prediction target for stable training regardless of the scale of rewards. We find in \Cref{fig:loss_ablation_main} that reward prediction remains a challenge when using common loss functions, such as L1 and L2, with L1 significantly degrading task performance on RoboDesk. On the other hand, applying reward smoothing improves performance for all three loss functions.

\paragraph{Smoothing Parameter.}
In \Cref{fig:param_sweep}, we analyze the impact of the smoothing parameters $\sigma$ and $\alpha$ for Gaussian and EMA, respectively, on RoboDesk and Hand. We observe that DreamSmooth is insensitive to the smoothing parameters, performing well across a wide range of values.

\begin{figure}[ht]
    \centering
    \begin{subfigure}[t]{0.8\textwidth}
        \includegraphics[width=\textwidth]{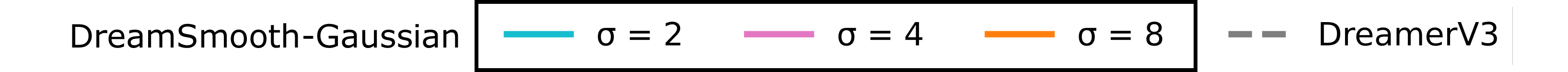}
    \end{subfigure}
    \\
    \begin{subfigure}[t]{0.48\textwidth}
        \includegraphics[width=\textwidth]{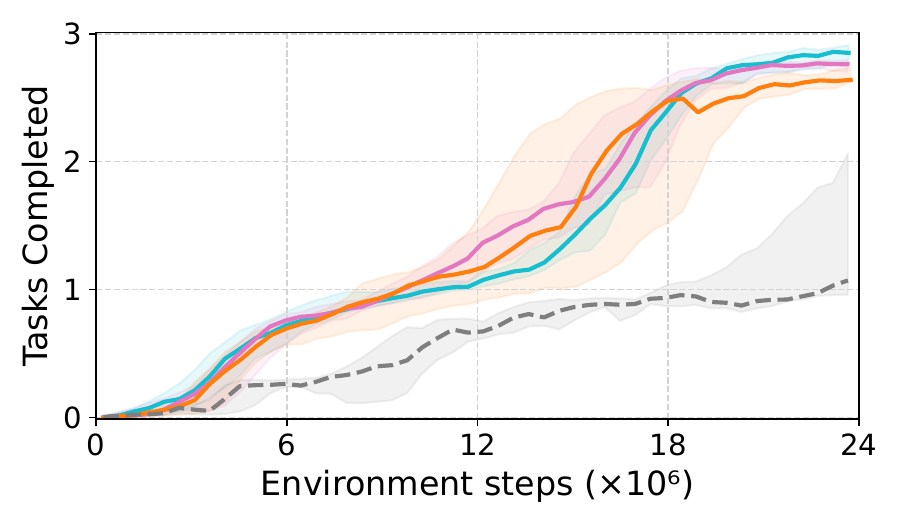}
        \vspace{-1.5em}
        \caption{Gaussian Smoothing on RoboDesk}
        \label{fig:param_desk_gaussian}
    \end{subfigure}
    \hfill
    \begin{subfigure}[t]{0.48\textwidth}
        \includegraphics[width=\textwidth]{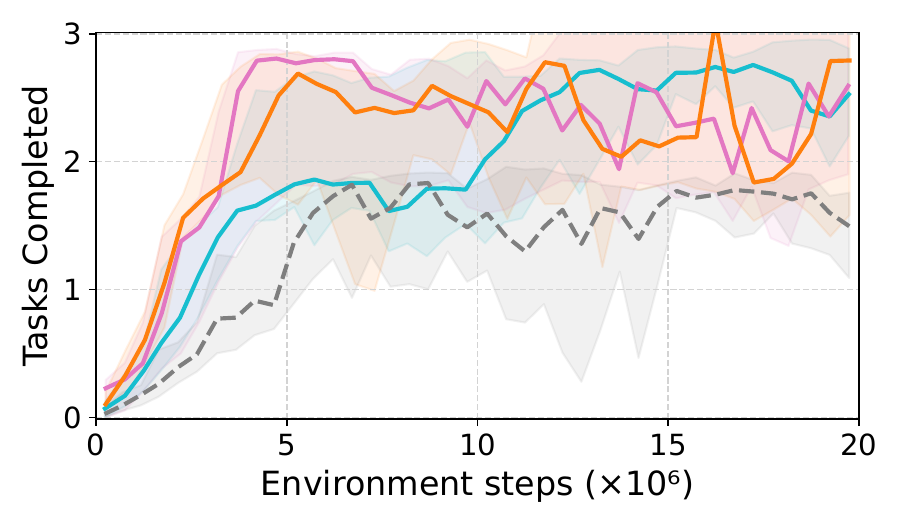}
        \vspace{-1.5em}
        \caption{Gaussian Smoothing on Hand}
        \label{fig:param_hand_gaussian}
    \end{subfigure}
    \vspace{1em}
    \\
    \begin{subfigure}[t]{0.8\textwidth}
        \includegraphics[width=\textwidth]{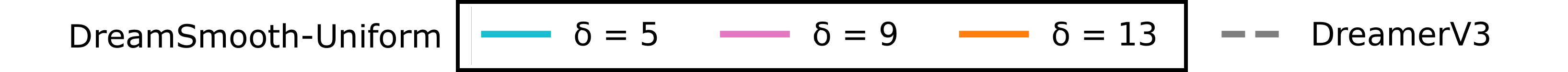}
    \end{subfigure}
    \\
    \begin{subfigure}[t]{0.48\textwidth}
        \includegraphics[width=\textwidth]{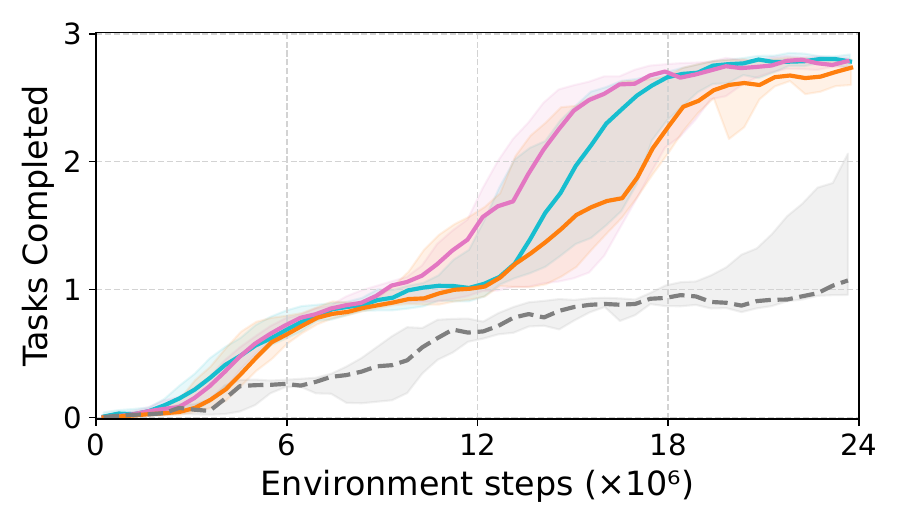}
        \vspace{-1.5em}
        \caption{Uniform Smoothing on RoboDesk}
        \label{fig:param_desk_uniform}
    \end{subfigure}
    \hfill
    \begin{subfigure}[t]{0.48\textwidth}
        \includegraphics[width=\textwidth]{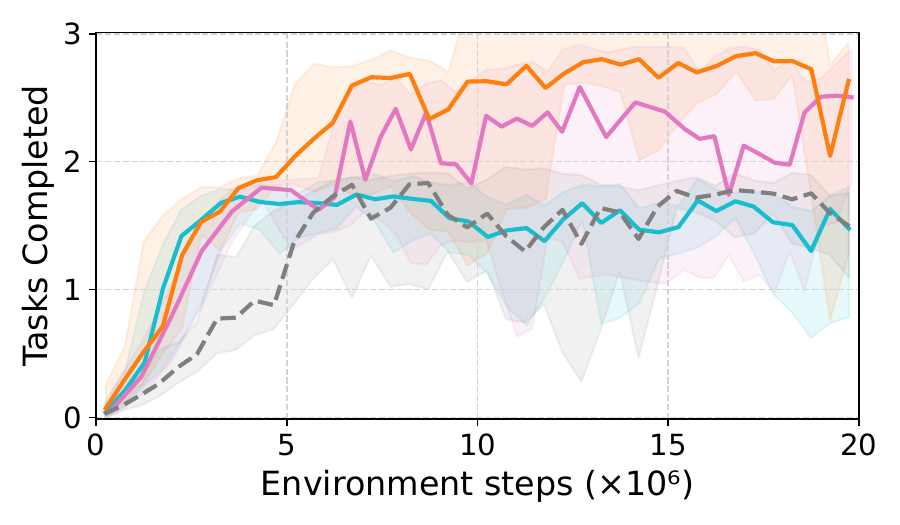}
        \vspace{-1.5em}
        \caption{Uniform Smoothing on Hand}
        \label{fig:param_hand_uniform}
    \end{subfigure}
    \vspace{1em}
    \\
    \begin{subfigure}[t]{0.8\textwidth}
        \includegraphics[width=\textwidth]{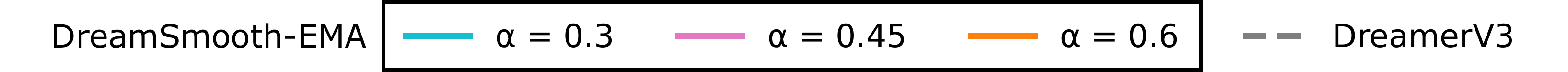}
    \end{subfigure}
    \\
    \begin{subfigure}[t]{0.48\textwidth}
        \includegraphics[width=\textwidth]{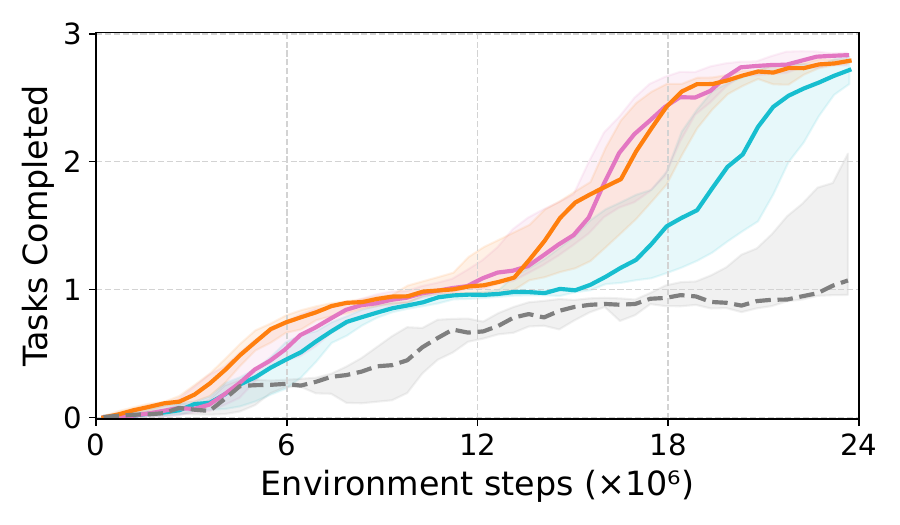}
        \vspace{-1.5em}
        \caption{EMA Smoothing on RoboDesk}
        \label{fig:param_desk_exp}
    \end{subfigure}
    \hfill
    \begin{subfigure}[t]{0.48\textwidth}
        \includegraphics[width=\textwidth]{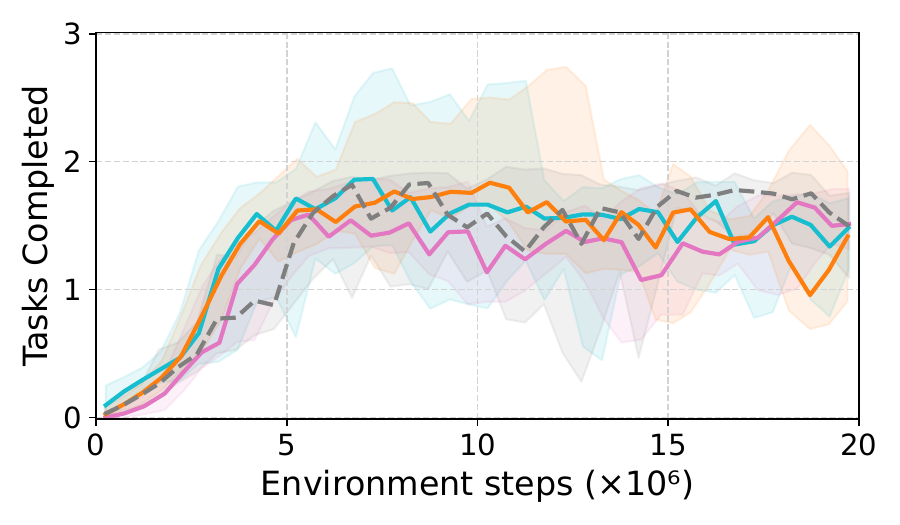}
        \vspace{-1.5em}
        \caption{EMA Smoothing on Hand}
        \label{fig:param_hand_exp}
    \end{subfigure}
    \caption{Parameter sweep over smoothing parameters $\sigma$, $\delta$, and $\alpha$. The lines show median task performance over 3 seeds, while shaded regions show maximum and minimum.}
    \label{fig:param_sweep}
\end{figure}
\section{Conclusion}
\label{sec:conclusion}

In this paper, we identify the reward prediction problem in MBRL and provide a simple yet effective solution: \textit{reward smoothing}. Our approach, DreamSmooth, demonstrates superior performance in sparse reward tasks where reward prediction is not trivial mainly due to the partial observability or stochasticity of the environments. Moreover, DreamSmooth shows comparable results on the commonly used benchmarks, DMC and Atari, showing its task-agnostic nature. Although we show that our simple reward smoothing approach mitigates the difficulty in reward prediction, the improved reward prediction does not always improve the task performance, e.g., in Crafter. This can be because more predicted task rewards can also result in more false positives. Further investigation on this trade-off is a promising direction for future work.

\clearpage


\subsubsection*{Acknowledgments}
This work was supported in part by the BAIR Industrial Consortium, an ONR DURIP grant, Komatsu, and InnoHK Centre for Logistics Robotics. We would like to thank Seohong Park for proofreading our proof and all members of the Berkeley Robot Learning lab for their insightful feedback.

\bibliography{bib/conference, bib/additional, bib/lee}
\bibliographystyle{iclr2024_conference}

\clearpage
\appendix

\section{Proofs}
\label{sec:proofs}

Let $\mathcal{M}=(\mathcal{S}, \mathcal{A}, P, R, \gamma)$ be the given MDP. Without loss of generality, we assume the augmented form of the MDP $\mathcal{M}$, where a state $\vs_t$ includes the entire history of states, i.e., $\vs_t=(\vs_1, \dots, \vs_t)$, and thus, reward functions $R, \tilde{R}$ have access to previous states, i.e., $\tilde{R}(\vs_t)=\tilde{R}(\vs_1, \dots, \vs_t)$.

\begin{theorem}
    \label{thm:potential_proof}
    An optimal policy $\tilde{\pi}^*$ of the MDP with reward smoothing only with \textbf{past} rewards, e.g., EMA smoothing, $\tilde{\mathcal{M}}=(\mathcal{S}, \mathcal{A}, P, \tilde{R}, \gamma)$ is also optimal under the original MDP $\mathcal{M}$, where
    \begin{equation}
        \tilde{R}(\vs_t) = \sum_{i=-L}^{0} f_i \cdot \gamma^i R(\vs_{t+i}) \quad \text{and} \quad \sum_{i=-L}^{0} f_i = 1.
    \end{equation}
\end{theorem}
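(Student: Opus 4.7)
The plan is to show that for every policy $\pi$ the expected discounted return under the smoothed reward differs from that under the original reward by at most a \emph{policy-independent} constant, so that $\argmax_\pi V^\pi_{\tilde{\mathcal{M}}}(\vs_1) = \argmax_\pi V^\pi_{\mathcal{M}}(\vs_1)$ and $\tilde{\pi}^{*}$ is automatically optimal in $\mathcal{M}$ too. Everything is done on the augmented state space stipulated by the theorem, so conditioning on the full history is legal.

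My main route is a direct trajectory-wise computation. Starting from $\sum_{t\ge 1}\gamma^{t-1}\tilde{R}(\vs_t)=\sum_{t\ge 1}\gamma^{t-1}\sum_{i=-L}^{0} f_i\,\gamma^{i} R(\vs_{t+i})$, I would swap the order of summation and substitute $k=t+i$, getting $\sum_{i=-L}^{0} f_i \sum_{k\ge 1+i} \gamma^{k-1} R(\vs_k)$. Using the natural convention that $R(\vs_k)=0$ for $k\le 0$ (the pre-start sentinel implied by the $\mathrm{clip}(t+i,0,T)$ in \Cref{eq:smoothing}), the inner sum collapses to $\sum_{k\ge 1}\gamma^{k-1} R(\vs_k)$, independent of $i$. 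The outer factor then collapses through $\sum_{i=-L}^{0} f_i = 1$, yielding return-equivalence trajectory by trajectory. Taking expectations under $\pi$ gives $V^\pi_{\tilde{\mathcal{M}}}(\vs_1)=V^\pi_{\mathcal{M}}(\vs_1)$ for every $\pi$ and concludes the argument.

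A cleaner alternative I would present as a short remark is to recognize the smoothing as a potential-based shaping. Because the augmented state $\vs_t=(\vs_1,\dots,\vs_t)$ already carries the history, the function $\Phi(\vs_t) = -\sum_{j=1}^{L}\bigl(\sum_{i=j}^{L} f_{-i}\bigr)\gamma^{-j}\,R(\vs_{t-j})$ is a well-defined potential, and matching coefficients of $R(\vs_{t-j})$ for $j=0,1,\dots,L$ on both sides should verify the telescoping identity $\tilde{R}(\vs_t) = R(\vs_t) + \gamma\,\Phi(\vs_{t+1}) - \Phi(\vs_t)$. Invoking the classical theorem of Ng, Harada and Russell (1999) then gives optimality preservation in one line, and handles discount and horizon issues uniformly.

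The main obstacle I anticipate is careful bookkeeping at the episode boundary: for $t\le L$ some of the shifted indices $t+i$ fall below $1$, and the $\mathrm{clip}$ convention in \Cref{eq:smoothing} sends them to a sentinel rather than dropping them from the sum. The mismatch is, however, only an $\vs_1$-dependent but policy-\emph{independent} constant --- precisely $\Phi(\vs_1)$ in the shaping view, and a finite pre-start tail in the direct view --- so it is harmless for the $\argmax$. I would handle this either by absorbing it into the convention $r_k := 0$ for $k\le 0$, or by explicitly writing the constant out and noting its cancellation when comparing $V^\pi$ across policies.
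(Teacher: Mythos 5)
Your proposal is correct, and it in fact contains two arguments: the one you relegate to a ``short remark'' is precisely the paper's own proof, while your main route is genuinely different. The paper proves this theorem by invoking the potential-based shaping theorem of \citet{ng1999policy} with the potential $\Phi(\vs_t) = -\sum_{i=-L}^{-1}\gamma^i R(\vs_{t+i}) + \sum_{i=-L}^{0}\gamma^{i} R(\vs_{t+i}) \sum_{j=i+1}^{0} f_j$; once you note that the $i=0$ term vanishes and collect coefficients using $\sum_i f_i = 1$, this is exactly your $\Phi(\vs_t) = -\sum_{j=1}^{L}\bigl(\sum_{i=j}^{L} f_{-i}\bigr)\gamma^{-j}R(\vs_{t-j})$, and your telescoping identity does check out: the coefficient of $\gamma^{-j}R(\vs_{t-j})$ in $\gamma\Phi(\vs_{t+1})-\Phi(\vs_t)$ is $f_{-j}$ for $j\ge 1$ and $f_0-1$ for $j=0$, so adding $R(\vs_t)$ recovers $\tilde{R}(\vs_t)$. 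Your main route---swapping the order of summation to get trajectory-wise equality of discounted returns up to a policy-independent boundary term---is not what the paper does for this theorem, but it is essentially the technique the paper reserves for its second, more general theorem on two-sided smoothing. The trade-off is as you would expect: the direct computation is more elementary and extends immediately to acausal filters such as Gaussian smoothing, but it only certifies agreement of values at the state you start the sum from, so to get state-wise optimality you must rerun it from every reachable augmented state (which works, since the history is in the state and the boundary constant depends only on that history, not on the future policy); the shaping route buys policy invariance at every state in one step from the cited theorem. Both versions handle the $t\le L$ boundary correctly only because the augmented state carries the needed past rewards, a point you rightly flag.
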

\begin{proof}
    We will use the theorem of reward shaping that guarantees an optimal policy introduced in \citet{ng1999policy}: if a modified reward function can be represented in the form of $R(\vs_t) + \gamma \Phi(\vs_{t+1}) - \Phi(\vs_t)$ with any potential function $\Phi(\vs_t)$, the new reward function yields the same optimal policy with the original reward function $R$.
    
    Let the potential function for the EMA reward smoothing 
    \begin{equation}
        \Phi(\vs_t) = -\sum_{i=-L}^{-1} \gamma^i R(\vs_{t+i}) + \sum_{i=-L}^{0} \gamma^{i} R(\vs_{t+i}) \cdot \sum_{j=i+1}^{0} f_j.
    \end{equation}
    Then, our reward shaping term in $\tilde{R}$ can be represented as the difference in the potential function $\gamma \Phi(\vs_{t+1}) - \Phi(\vs_{t})$ as follows:
    \begin{equation}
        \gamma \Phi(\vs_{t+1}) - \Phi(\vs_t) =  -R(\vs_t) + \sum_{i=-L}^{0} f_{i} \cdot \gamma^{i} R(\vs_{t+i}).
    \end{equation}
    
    \begin{equation}
        R(\vs_t) + \gamma \Phi(\vs_{t+1}) - \Phi(\vs_t) = \sum_{i=-L}^{0} f_{i} \cdot \gamma^{i} R(\vs_{t+i}) = \tilde{R}.    
    \end{equation}

    Hence, following \citet{ng1999policy}, reward shaping with our EMA smoothing guarantees the optimal policy in the original MDP $\mathcal{M}$.
\end{proof}

However, \Cref{thm:potential_proof} does not apply to smoothing functions that require access to future rewards, e.g., Gaussian smoothing. As in Gaussian smoothing, a smoothed reward function may require future rewards, which are conditioned on the current policy; so is the reward model. In such cases, there is no theoretical guarantee; but in our experiments, we empirically show that reward models can adapt their predictions along the changes in policies and thus, improve MBRL.

Instead, we intuitively explain that an optimal policy under any reward smoothing (even though the reward function is post hoc and cannot be defined for MDPs) is also optimal under the original reward function.

\begin{theorem}
    An optimal policy $\tilde{\pi}^*$ with the smoothed reward function $\tilde{R}$ is also optimal under the original reward function $R$, where
    \begin{equation}
        \label{eq:smoothing_appendix}
        \tilde{R}(\vs_t) = \sum_{i=-L}^{L} \gamma^{\text{clip}(i, -t, T-t)} \cdot f_i \cdot R(\vs_{\text{clip}(t+i, 0, T)}) \quad \text{and} \quad \sum_{i=-L}^{L} f_i = 1.
    \end{equation}
\end{theorem}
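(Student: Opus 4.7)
The plan is to bypass \citet{ng1999policy}'s potential-function argument, since for smoothing kernels with $f_i>0$ for some $i>0$ (e.g.\ Gaussian) the surrogate $\tilde{R}$ depends on \emph{future} states and is therefore not a legitimate MDP reward, putting the route of \Cref{thm:potential_proof} out of reach. I would argue instead at the trajectory level, aiming to establish the pathwise identity $\sum_{t=0}^{T} \gamma^t \tilde{R}(\vs_t) = \sum_{t=0}^{T} \gamma^t R(\vs_t)$ for every realized rollout. Once this identity is in hand, linearity of expectation immediately gives $\mathbb{E}_\pi[\sum_t \gamma^t \tilde{R}(\vs_t)] = \mathbb{E}_\pi[\sum_t \gamma^t R(\vs_t)]$ for every policy $\pi$, so the two $\argmax$ agree and $\tilde{\pi}^*$ is optimal under $R$, yielding the theorem.

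The core calculation is a double-sum interchange. Substituting the definition of $\tilde R$ and swapping the order of summation gives $\sum_{t=0}^T \gamma^t \tilde{R}(\vs_t) = \sum_{i=-L}^{L} f_i \sum_{t=0}^T \gamma^{\,t+\text{clip}(i,-t,T-t)} R(\vs_{\text{clip}(t+i,0,T)})$. On the interior, where $0 \le t+i \le T$, the two clips are vacuous, the exponent collapses to $t+i$, and reindexing $j = t+i$ reduces the inner sum to $\sum_j \gamma^j R(\vs_j)$, which can then be pulled outside the sum over $i$; using $\sum_i f_i = 1$ recovers the original discounted return exactly. The exponent $\gamma^{\text{clip}(i,-t,T-t)}$ in \Cref{eq:smoothing_appendix} is evidently engineered so that out-of-horizon contributions inherit the discount that the corresponding boundary state $\vs_0$ or $\vs_T$ would carry in an infinite-horizon extension, which is exactly what makes the interior calculation work cleanly.

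The hard part will be the boundary bookkeeping. A short direct computation shows that the left- and right-clipped regions deposit extra mass on $R(\vs_0)$ and $\gamma^T R(\vs_T)$, while rewards at times $0 < j < L$ and $T-L < j < T$ receive less than their full unit weight. To close the proof I would either (i)~argue these residuals cancel by a careful count of how many clipped pairs $(t,i)$ land on each boundary state, or (ii)~concede that the pathwise identity only holds up to an additive term of order $L \cdot \max_t |R(\vs_t)|$ concentrated on $R(\vs_0)$ and $\gamma^T R(\vs_T)$, and note that for $\gamma < 1$ and long horizons this residual is vanishingly small and, to leading order, does not alter which policy maximizes the return. Given the paper's explicitly ``intuitive'' framing, option~(ii) with a long-horizon caveat appears to be the honest statement, and that is what I would write up.
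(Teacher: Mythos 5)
Your overall route is exactly the paper's: establish the pathwise identity $\sum_{t}\gamma^t\tilde R(\vs_t)=\sum_t\gamma^t R(\vs_t)$, deduce that every policy has the same expected return under $R$ and $\tilde R$, and conclude that the maximizers coincide (the paper phrases the last step as a contradiction, which is cosmetic). Where you differ is that you actually stop to examine the boundary terms, and your suspicion is correct: the paper's middle equality --- replacing $\sum_{i}\gamma^{\mathrm{clip}(i,-t,T-t)}f_i\,R(\vs_{\mathrm{clip}(t+i,0,T)})$ by $R(\vs_t)\sum_i f_i$ after the interchange --- is only valid where the clips never bind, i.e.\ for $L\le t\le T-L$. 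Writing $g(u)=\gamma^u R(\vs_u)$, the inner sum over $t$ for fixed $i>0$ equals $\sum_{j=i}^{T}g(j)+i\,g(T)$, and for $i=-m<0$ it equals $m\,g(0)+\sum_{j=0}^{T-m}g(j)$, so the total differs from $\sum_{j=0}^T g(j)$ by a residual supported on the first and last $L$ steps. Your option~(i) fails: the residuals cancel only for $L=1$ with a symmetric kernel. A concrete counterexample is $T=3$, $f_{2}=f_{-2}=\tfrac{1}{2}$, all other $f_i=0$, which yields $\tfrac{3}{2}g(0)+\tfrac{1}{2}g(1)+\tfrac{1}{2}g(2)+\tfrac{3}{2}g(3)\neq \sum_j g(j)$ for generic rewards.

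So option~(ii) is the only honest write-up, but state it more carefully than you do. The part of the residual attached to $\gamma^T R(\vs_T)$ is indeed killed by discounting for long horizons, but the part attached to $R(\vs_0),\dots,R(\vs_{L-1})$ is undiscounted and trajectory-dependent, so it does \emph{not} vanish as $T\to\infty$; it is $O(L\cdot\max_t|R(\vs_t)|)$ against a total return of order $(1-\gamma)^{-1}\max_t|R(\vs_t)|$, and it can in principle flip the $\argmax$ when the optimality gap is comparable to the rewards collected within $L$ steps of the episode boundaries. What you can prove exactly is the theorem under the extra hypothesis that rewards vanish near the boundaries (or that episodes are padded); otherwise you get only approximate optimality. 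Note that the paper's own proof of this statement silently assumes the clips never bind and presents the result as an ``intuitive'' explanation rather than a rigorous guarantee, so your more guarded conclusion is the defensible one.
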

\begin{proof}
    First, we show that the discounted sum of original rewards $\sum_{t=0}^{T} \gamma^t R(\vs_t)$ and the one of smoothed rewards $\sum_{t=0}^{T} \gamma^t \tilde{R}(\vs_t)$ are the same for any trajectories $(\vs_0, \vs_1, \dots, \vs_T)$:
    \begin{align}
        \sum_{t=0}^{T} \gamma^t \tilde{R}(\vs_t) &= \sum_{t=0}^T \gamma^t \sum_{i=-L}^{L} \gamma^{\text{clip}(i, -t, T-t)} \cdot f_i \cdot R(\vs_{\text{clip}(t+i, 0, T)})  &  \text{from \Cref{eq:smoothing_appendix}} \\
                                        &= \sum_{t=0}^T \gamma^t R(\vs_{t}) \cdot \sum_{i=-L}^{L} f_i & \\
                                        &= \sum_{t=0}^T \gamma^t R(\vs_{t}).  &  \text{from} \sum_{i=-L}^{L} f_i = 1 \label{eq:smoothing_equality}
    \end{align}

    Let an optimal policy under the smoothed rewards $\tilde{R}$ be $\tilde{\pi}^*$. Assume that $\tilde{\pi}^*$ is not optimal under the original reward $R$. Then, 
    \begin{equation}
        \exists \pi^*, \vs_0 
        \quad \text{such that} \quad 
        \E_{(\vs_0,\dots,\vs_T) \sim \pi^*} \Big[ \sum_{t=0}^{T} \gamma^t R(\vs_t) \Big] > \E_{(\vs_0,\dots,\vs_T) \sim \tilde{\pi}^*} \Big[ \sum_{t=0}^{T} \gamma^t \tilde{R}(\vs_t) \Big]. \label{eq:smoothing_non_optimality}
    \end{equation}
    However, 
    \begin{align}
        \E_{(\vs_0,\dots,\vs_T) \sim \pi^*} \Big[ \sum_{t=0}^{T} \gamma^t R(\vs_t) \Big] &= \E_{(\vs_0,\dots,\vs_T) \sim \pi^*} \Big[ \sum_{t=0}^{T} \gamma^t \tilde{R}(\vs_t) \Big] & \text{by \Cref{eq:smoothing_equality}} \\
        &> \E_{(\vs_0,\dots,\vs_T) \sim \tilde{\pi}^*} \Big[ \sum_{t=0}^{T} \gamma^t \tilde{R}(\vs_t) \Big], & \text{by \Cref{eq:smoothing_non_optimality}}
    \end{align}
    which contradicts that $\tilde{\pi}^*$ is optimal under $\tilde{R}$.
    Therefore, the optimal policy $\tilde{\pi}^*$ under $\tilde{R}$ guarantees its optimality under $R$.
\end{proof}

\section{Implementation Details}
\label{sec:implementation_details}

Models are trained on NVIDIA A5000, V100, RTX Titan, RTX 2080, and RTX 6000 GPUs. Each experiment takes about $72$ hours for RoboDesk, $100$ hours for Hand, $150$ hours for Earthmoving, $96$ hours for Crafter, and $6$ hours for Atari and DMC tasks.

\subsection{Smoothing Functions in DreamSmooth}

\textbf{Gaussian smoothing} follows the Gaussian distribution with $\sigma$:
\begin{equation}
    f_i = k e^{\frac{-i^2}{2\sigma^2}},
\end{equation}
where $k = 1/(\sum_{i=-L}^{L} e^{\frac{-i^2}{2\sigma^2}})$ is a normalization constant. 

We implement this using 
\begin{verbatim}
    scipy.ndimage.gaussian_filter1d(rewards, sigma, mode="nearest")
\end{verbatim}

\textbf{Uniform smoothing} distributes rewards equally across $\delta$ consecutive timesteps. 
\begin{equation}
    f_i = \frac{1}{\delta} \quad \forall i \in \Big[-\frac{\delta-1}{2}, \frac{\delta-1}{2}\Big].
\end{equation}

We implement this using 
\begin{verbatim}
    scipy.ndimage.convolve(rewards, filter, mode="nearest")
\end{verbatim}

\textbf{EMA smoothing} uses the following smoothing function:
\begin{equation}
    f_i = \alpha (1-\alpha)^i \quad \forall i \leq 0,
\end{equation}
which we implement by performing the following at each timestep:
\begin{verbatim}
    reward[t] = alpha * reward[t - 1] + (1 - alpha) * reward[t]
\end{verbatim}

\subsection{Model-based Reinforcement Learning Backbones}

Hyperparameters for DreamerV3, TD-MPC, and MBPO experiments are shown in \Cref{table:dreamer3-hyperparameters}, \Cref{table:tdmpc-hyperparameters}, and \Cref{table:mbpo-hyperparameters}, respectively.

\begin{table}[ht]
    \centering
    \caption{DreamerV3 hyperparameters. Episode length is measured in environment steps, which is the number of agent steps multiplied by action repeat. Model sizes are as listed in~\citet{hafner2023mastering}, which we also refer to for all other hyperparameters.}
    \label{table:dreamer3-hyperparameters}
    \begin{tabular}{c|c c c c c c c}
        \toprule
        \textbf{Environment} & \textbf{Action Repeat} & \textbf{Episode Length}    & \textbf{Train Ratio}   & \textbf{Model Size}    & $\sigma$  & $\alpha$ & $\delta$ \\
        \midrule
        Earthmoving & $4$             & $2000$              & $64  $          & L             & $3$         & $0.33$ & $9$\\
        RoboDesk    & $8$             & $2400$              & $64  $          & L             & $3$         & $0.3$ & $9$\\
        Hand        & $1$             & $300$               & $64  $          & L             & $3$         & $0.3$ & $9$\\
        Crafter     & $1$             & Variable            & $64  $          & XL            & $3$         & $0.3$ & $9$\\
        DMC         & $2$             & $1000$              & $512 $          & S             & $3$         & $0.33$ & $9$\\
        Atari       & $4$             & Variable            & $1024$          & S             & $3$         & $0.3$ & $9$\\
        \bottomrule
    \end{tabular}
\end{table}

\begin{table}[ht]
    \centering
    \caption{TD-MPC hyperparameters. We refer to \citet{hansen2022temporal} for all other hyperparameters.}
    \label{table:tdmpc-hyperparameters}
    \begin{tabular}{c|c c c c}
        \toprule
        \textbf{Environment} & \textbf{Latent Dimension}  & \textbf{CNN channels}     & \textbf{Planning Iterations}    & $\sigma$ \\
        \midrule
        Hand-Pixel  & 128               & 64               & 6                      & 3        \\
        Hand-Proprio& 128               & --               & 12                     & 3       \\
        \bottomrule
    \end{tabular}
\end{table}

\begin{table}[ht]
    \centering
    \caption{MBPO hyperparameters. We refer to \citet{janner2019mbpo} for all other hyperparameters.}
    \label{table:mbpo-hyperparameters}
    \begin{tabular}{c|c c c c}
        \toprule
        \textbf{Environment} & \textbf{Layer Size}  & \textbf{Prediction Head Size}     & \textbf{Demo Pre-Training Steps}    & $\sigma$ \\
        \midrule
        Hand-Proprio  & 512               & 400              & 30000                      & 3        \\
        RoboDesk-Proprio& 512               & 400              & 30000                     & 3       \\
        DMC-Proprio& 256               & 200               & 0                     & 3       \\
        \bottomrule
    \end{tabular}
\end{table}

\section{Environment Details}
\label{sec:environment_details}

\subsection{RoboDesk Environment}
We use a modified version of RoboDesk~\citep{kannan2021robodesk}, where a sequence of manipulation tasks (\texttt{flat\_block\_in\_bin, upright\_block\_off\_table, push\_green}) need to be completed in order. \Cref{fig:robodesk-task} shows images of an agent successfully completing each of these tasks.

In the original environment, dense rewards are based on Euclidean distances of objects to their targets, with additional terms to encourage the arm to reach the object. They typically range from $0$ to $10$ per timestep. We use these dense rewards together with a large sparse reward of $300$ for each task completed.

\begin{figure}[ht]
    \vspace{-1em}
    \centering
    \begin{subfigure}[t]{0.3\textwidth}
        \includegraphics[width=\textwidth]{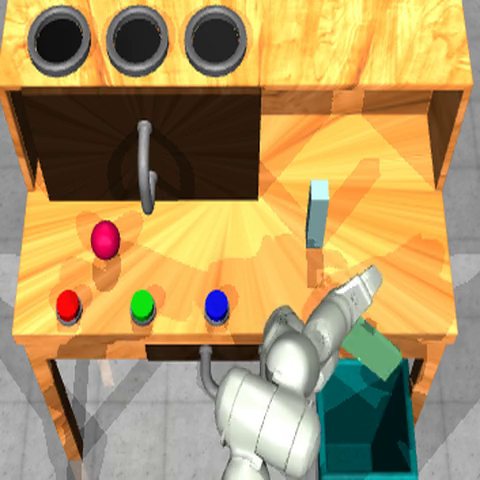}
        \caption{Push green block into the bin}
    \end{subfigure}
    \hfill
    \begin{subfigure}[t]{0.3\textwidth}
        \includegraphics[width=\textwidth]{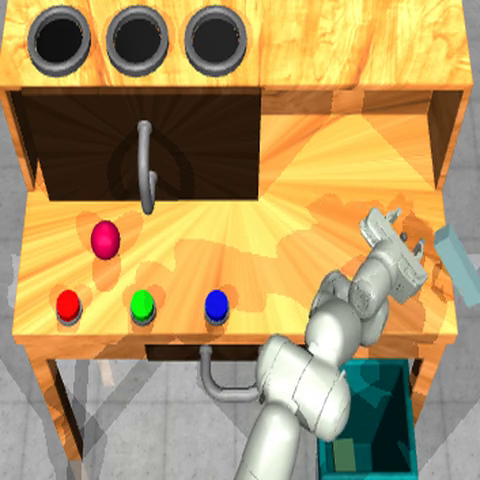}
        \caption{Push teal block off the table}
    \end{subfigure}
    \hfill
    \begin{subfigure}[t]{0.3\textwidth}
        \includegraphics[width=\textwidth]{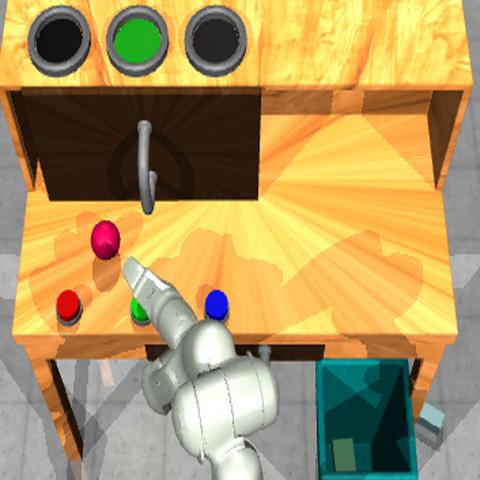}
        \caption{Press the green button}
    \end{subfigure}
    \vspace{-0.5em}
    \caption{Subtasks for RoboDesk.}
    \label{fig:robodesk-task}
\end{figure}

\subsection{Hand Environment}
We modified the Shadow Hand environment \citep{plappert2018multi}, so that the agent is required to achieve a sequence of pre-defined goal orientations in order. The first 3 goals are shown in \Cref{fig:hand-task}, while the subsequent goals are a repeat of the first 3. The goal orientations are chosen so that the agent only has to rotate the cube along the z-axis, and we only require the agent to match the cube's rotation to the goal, not its position.

In the original environment, dense rewards are computed using $r = -(10x + \Delta\theta)$, where $x$ is the Euclidean distance to some fixed position, and $\Delta\theta$ is the angular difference to the target orientation. In addition to these dense rewards, we provide a large sparse reward of $300$ for each goal successfully achieved by the agent.

\begin{figure}[ht]
    \centering
    \begin{subfigure}[t]{0.3\textwidth}
        \includegraphics[width=\textwidth]{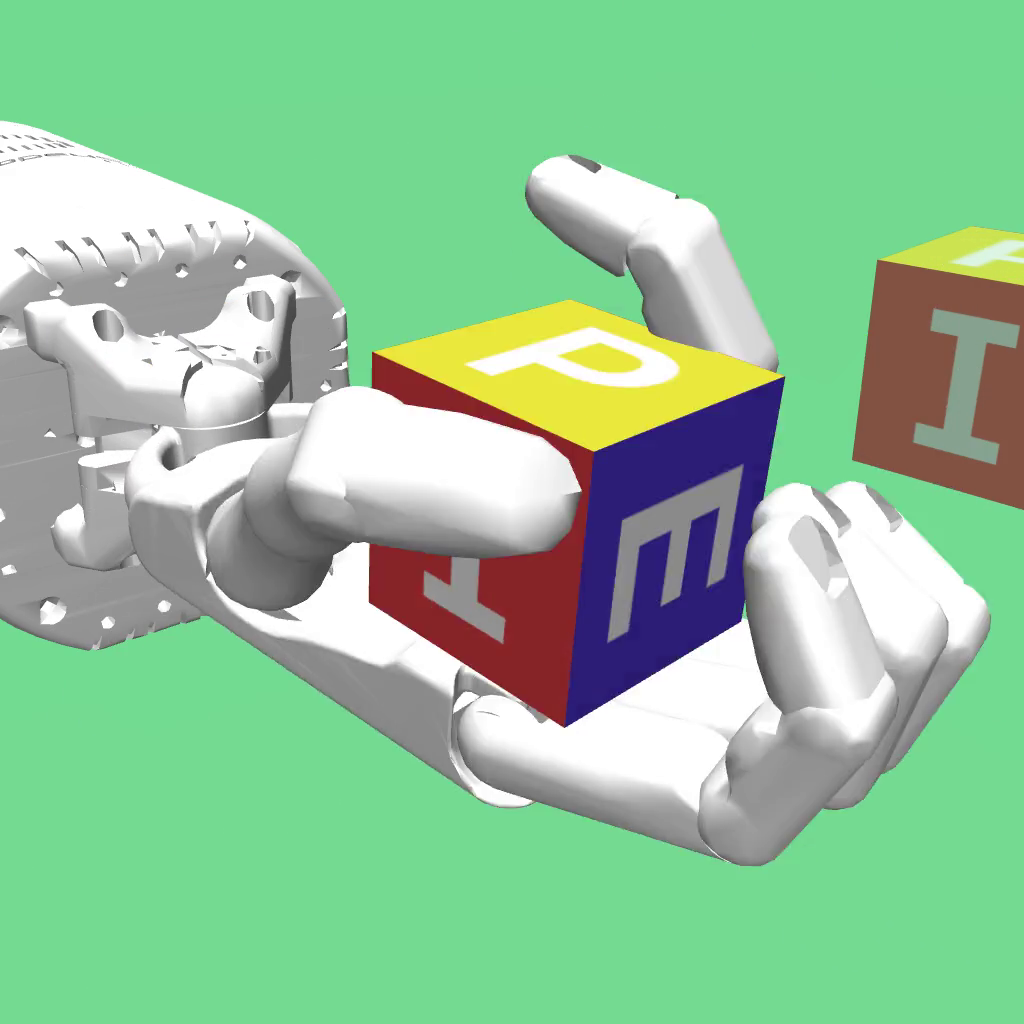}
        \caption{Goal $1$}
    \end{subfigure}
    \hfill
    \begin{subfigure}[t]{0.3\textwidth}
        \includegraphics[width=\textwidth]{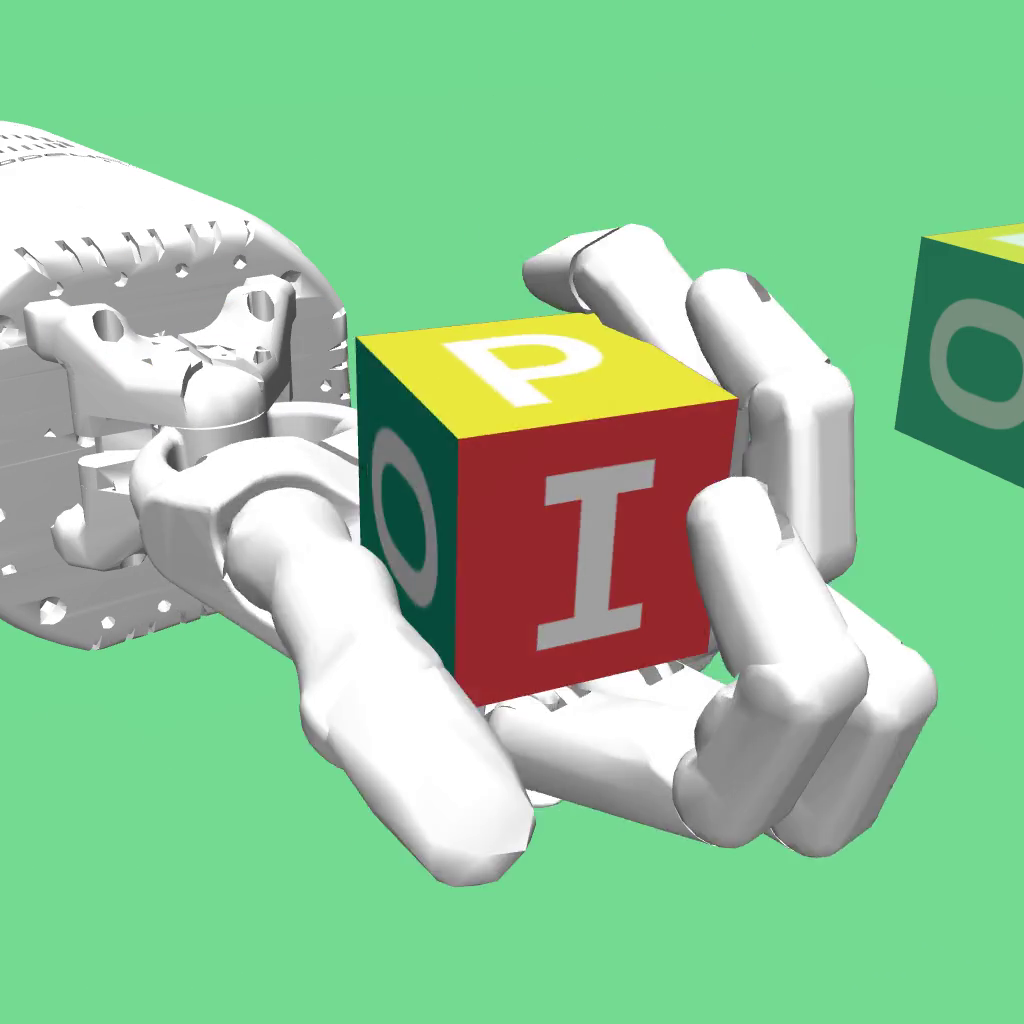}    
        \caption{Goal $2$}
    \end{subfigure}
    \hfill
    \begin{subfigure}[t]{0.3\textwidth}
        \includegraphics[width=\textwidth]{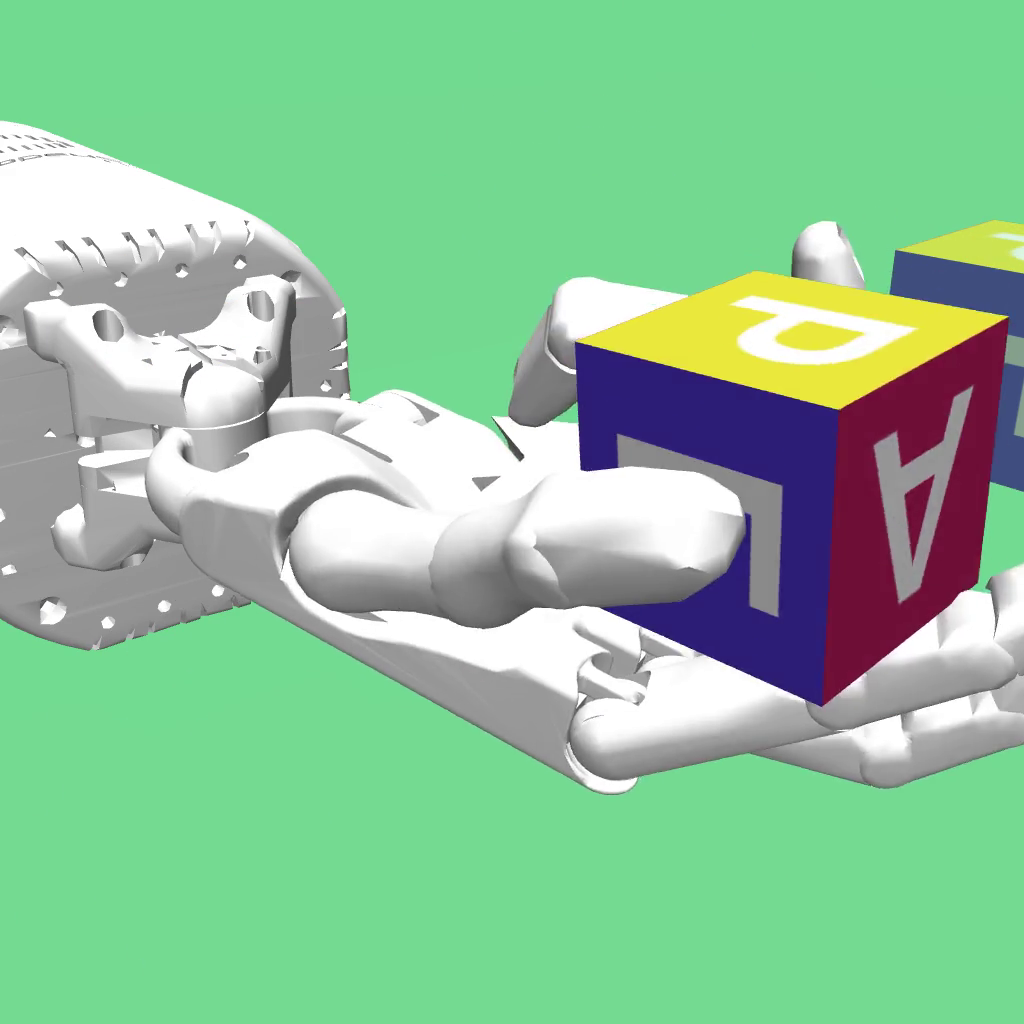}    
        \caption{Goal $3$}
    \end{subfigure}
    \vspace{-0.5em}
    \caption{Subtasks for Hand.}
    \label{fig:hand-task}
\end{figure}

\subsection{AGX Earthmoving Environment}
\label{sec:AGX}

\begin{wrapfigure}{r}{0.3\textwidth}
    \vspace{-1.4em}
    \includegraphics[width=0.3\textwidth]{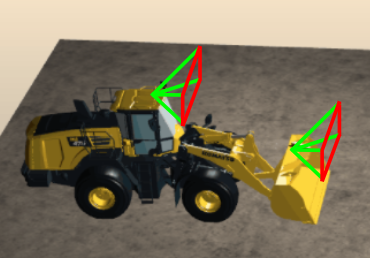}
    \caption{The agent uses one camera mounted on the cabin (left) for navigation, and one mounted on the bucket (right) for observing interactions with rocks and terrain.}
    \label{fig:agx-cameras}
    \vspace{-2em}
\end{wrapfigure}

The Earthmoving environment consists of a wheel loader, dump truck, a pile of dirt, with some rocks on top of the pile. The environment is simulated using the realistic AGX Dynamics physics engine~\citep{agx}. The agent controls the wheel loader to pick up rocks and dump them in the dump truck.

The starting positions of the dirt pile, wheel loader, and dump truck are all randomized, as are the initial orientations of the dirt pile and wheel loader.

The agent's observations consist of 3 components: a wide-angle egocentric RGB camera mounted on the cabin to allow navigation, an RGB camera mounted on the bucket for observing interactions with rocks, and proprioceptive observations (positions, velocity, speed, force of actuators etc.). We use $64\times64\times3$ images for all cameras, while the proprioceptive observation has 21 dimensions.

The action space is 4-dimensional: 2 dimensions for driving and steering the loader, and 2 dimensions for moving and tilting the bucket.

The reward consists of a large sparse reward for rocks picked up and dumped, and dense rewards for moving rocks towards the dumptruck. The total reward $r^t$ at timestep $t$ is computed using \Cref{eq:agx-reward}.
\begin{equation}
    \label{eq:agx-reward}
 r^t = \underbrace{\lambda_{\text{dump}} (m_{\text{dump}}^{t} - m_{\text{dump}}^{t-1}) + \lambda_{\text{load}} (m_{\text{load}}^t - m_{\text{load}}^{t-1}) }_{\text{sparse reward}} + \underbrace{ \lambda_{\text{move}} m_{\text{load}}^{t} (\max{(2, d^t)} - \max{(2, d^{t-1})})}_{\text{dense reward}}
\end{equation}

Where $m_{\text{dump}}$, $m_{\text{load}}$ are rock masses in the dumptruck and the bucket respectively, $d$ is the distance between the shovel and a point above the dumptruck, and $\lambda$ are constants.

\clearpage

\section{DMC and Atari Benchmarking Results}
\label{sec:full_results}

\begin{figure}[ht]
    \centering
    \begin{subfigure}[t]{0.99\textwidth}
        \includegraphics[width=\textwidth]{figures/learning_curve_legend.pdf}
    \end{subfigure}
    \\
    \begin{subfigure}[t]{0.3\textwidth}
        \includegraphics[width=\textwidth]{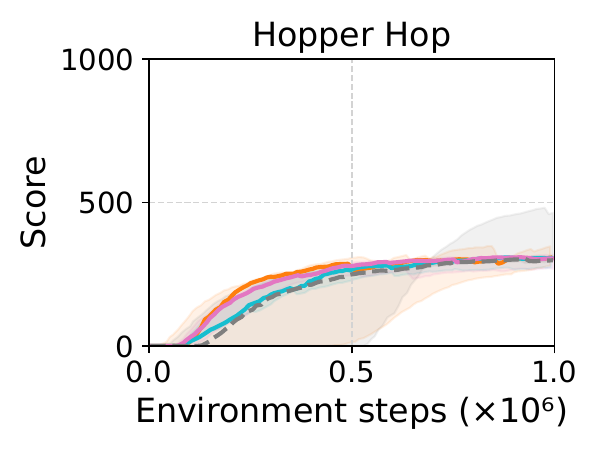}    
    \end{subfigure}
    \hfill
    \begin{subfigure}[t]{0.3\textwidth}
        \includegraphics[width=\textwidth]{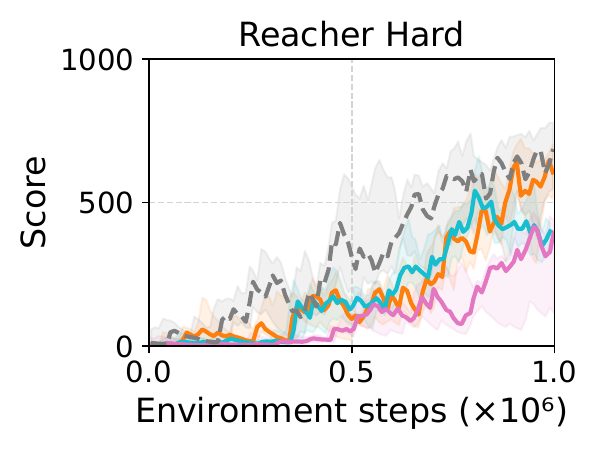}    
    \end{subfigure}
    \hfill
    \begin{subfigure}[t]{0.3\textwidth}
        \includegraphics[width=\textwidth]{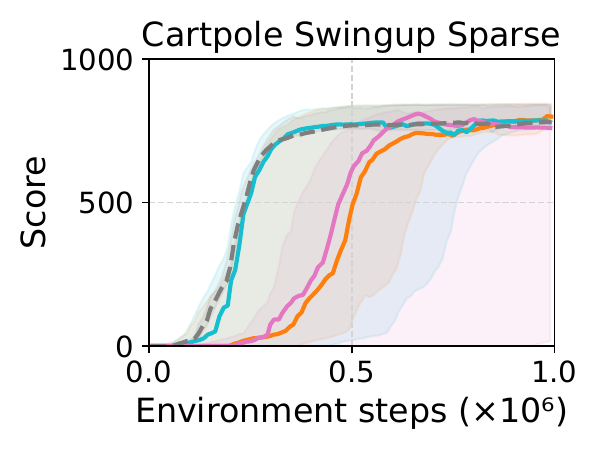}    
    \end{subfigure}
    \\
    \begin{subfigure}[t]{0.3\textwidth}
        \includegraphics[width=\textwidth]{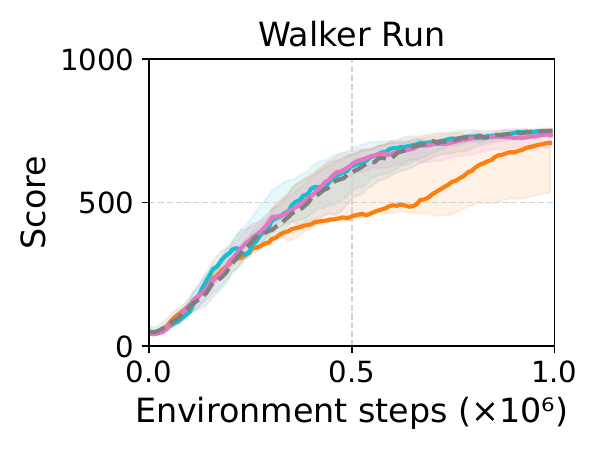}    
    \end{subfigure}
    \hfill
    \begin{subfigure}[t]{0.3\textwidth}
        \includegraphics[width=\textwidth]{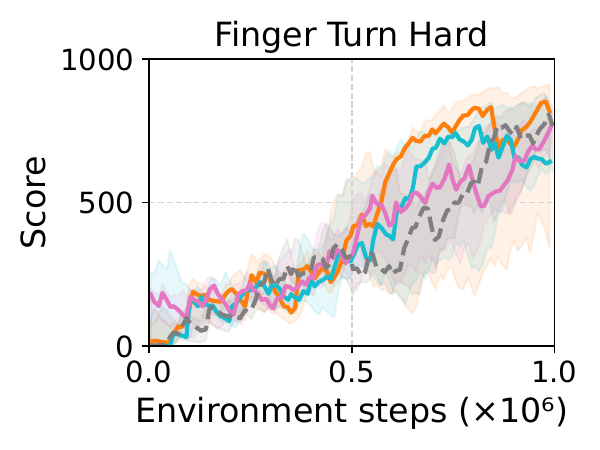}    
    \end{subfigure}
    \hfill    
    \begin{subfigure}[t]{0.3\textwidth}
        \includegraphics[width=\textwidth]{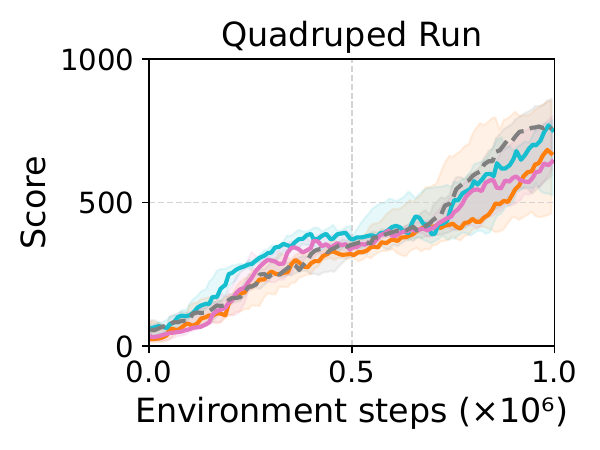}    
    \end{subfigure}
    \\
    \begin{subfigure}[t]{0.3\textwidth}
        \includegraphics[width=\textwidth]{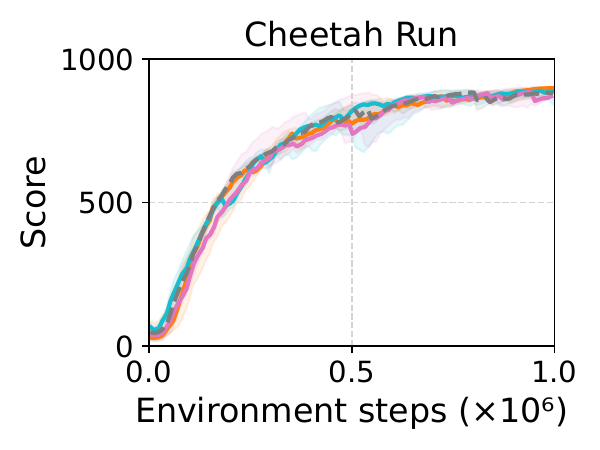}    
    \end{subfigure}
    \hfill
    \begin{subfigure}[t]{0.3\textwidth}
    \end{subfigure}
    \hfill
    \begin{subfigure}[t]{0.3\textwidth}
    \end{subfigure}
    \\
    \begin{subfigure}[t]{0.3\textwidth}
        \includegraphics[width=\textwidth]{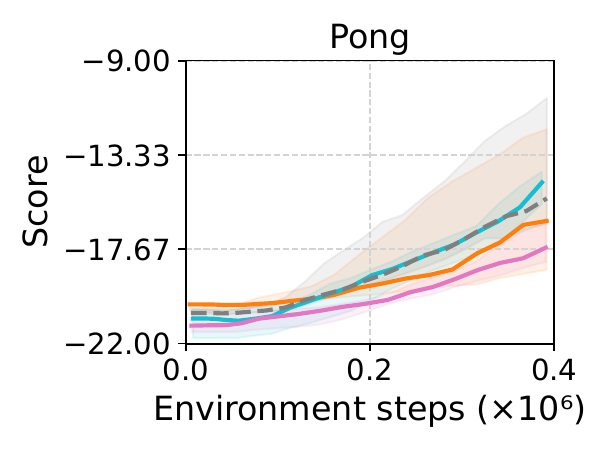}    
    \end{subfigure}
    \hfill
    \begin{subfigure}[t]{0.3\textwidth}
        \includegraphics[width=\textwidth]{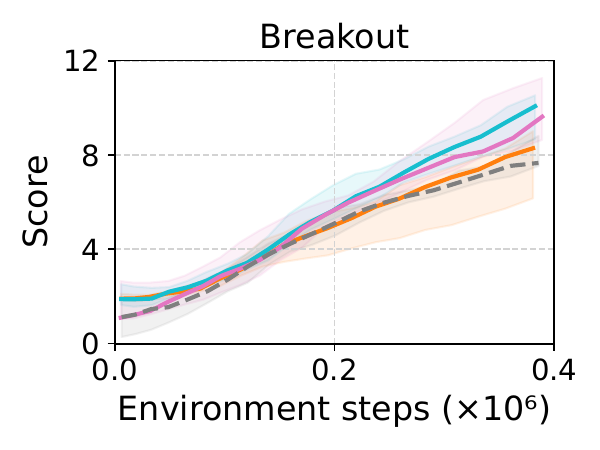}    
    \end{subfigure}
    \hfill
    \begin{subfigure}[t]{0.3\textwidth}
        \includegraphics[width=\textwidth]{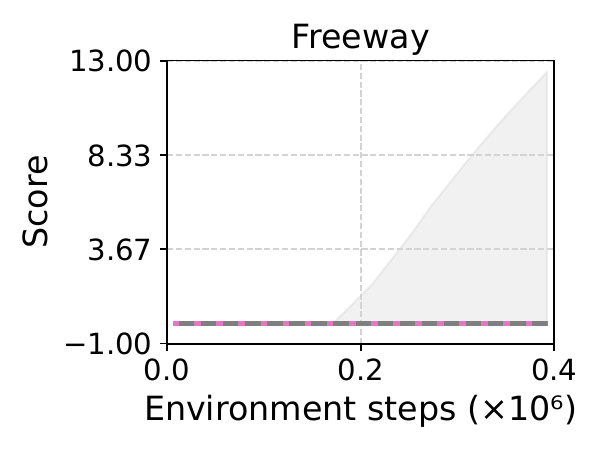}    
    \end{subfigure}
    \\
    \begin{subfigure}[t]{0.3\textwidth}
        \includegraphics[width=\textwidth]{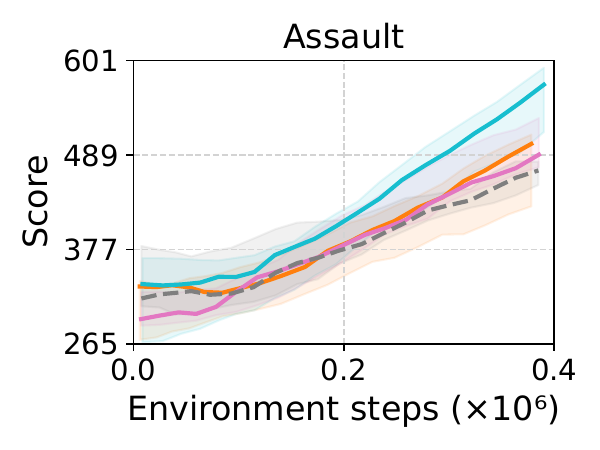}    
    \end{subfigure}
    \hfill
    \begin{subfigure}[t]{0.3\textwidth}
        \includegraphics[width=\textwidth]{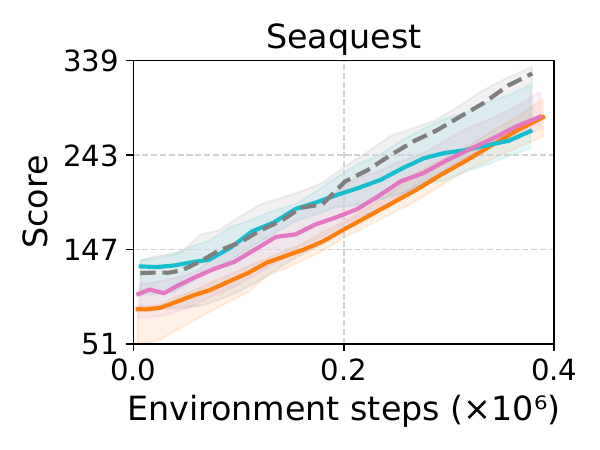}    
    \end{subfigure}
    \hfill
    \begin{subfigure}[t]{0.3\textwidth}
        \includegraphics[width=\textwidth]{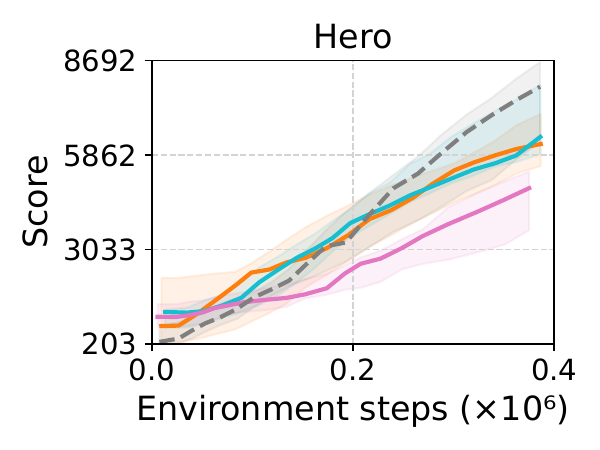}    
    \end{subfigure}
    \caption{Full learning curves for the DMC and Atari benchmarks.}
    \label{fig:full_results}
\end{figure}

\clearpage

\section{MBPO}
\label{sec:mbpo}

In addition to DreamerV3 and TD-MPC, we show the general applicability of our reward smoothing method with other MBRL algorithms, such as Model-Based Policy Optimization (MBPO)~\mbox{\citep{janner2019mbpo}}. 

Similar to the experiments on DreamerV3, \mbox{\Cref{fig:mbpo_learning_curves}} demonstrates that reward smoothing does not affect the performance of MBPO on the \textit{dense-reward} DMC tasks (Walker Walk, Cheetah Run, Reacher Easy, Cartpole Swingup), while significantly improving the performance of MBPO on \textit{sparse-reward} tasks, especially in the Hand environment. Since MBPO, when trained from scratch, struggles at solving Hand and RoboDesk, we initialize the replay buffer for these two experiments using trajectories from the fully-trained DreamerV3 policies (100 and 68 episodes, respectively). MBPO is then able to learn the first task in Hand with reward smoothing, while failing at both Hand and RoboDesk without reward smoothing. This indicates that DreamSmooth can also benefit Markovian models that have no access to neither past nor future states.

\begin{figure}[ht]
    \centering
    \begin{subfigure}[t]{0.5\textwidth}
        \includegraphics[width=\textwidth]{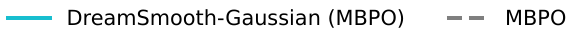}
    \end{subfigure}
    \\
    \begin{subfigure}[t]{0.3\textwidth}
        \includegraphics[width=\textwidth]{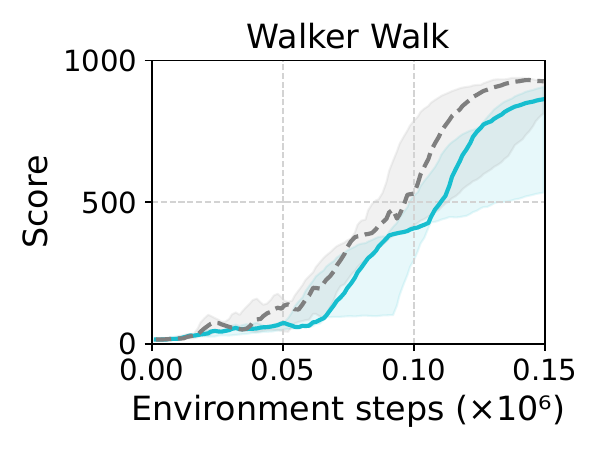}
    \end{subfigure}
    \hfill
    \begin{subfigure}[t]{0.3\textwidth}
        \includegraphics[width=\textwidth]{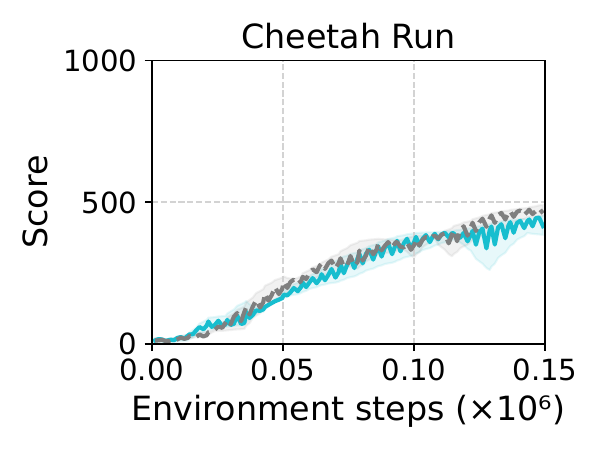}
    \end{subfigure}
    \hfill
    \begin{subfigure}[t]{0.3\textwidth}
        \includegraphics[width=\textwidth]{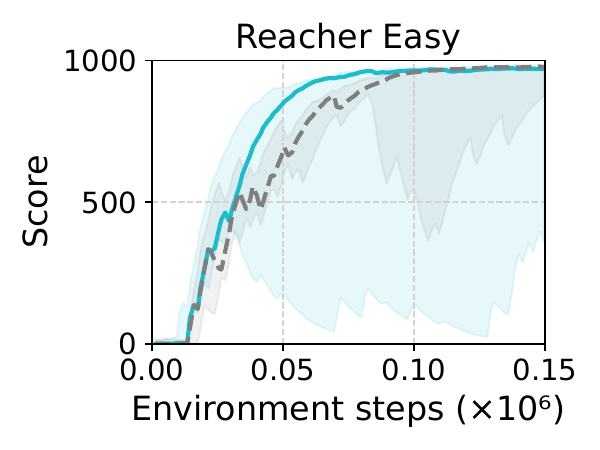}
    \end{subfigure}
    \\
    \begin{subfigure}[t]{0.3\textwidth}
        \includegraphics[width=\textwidth]{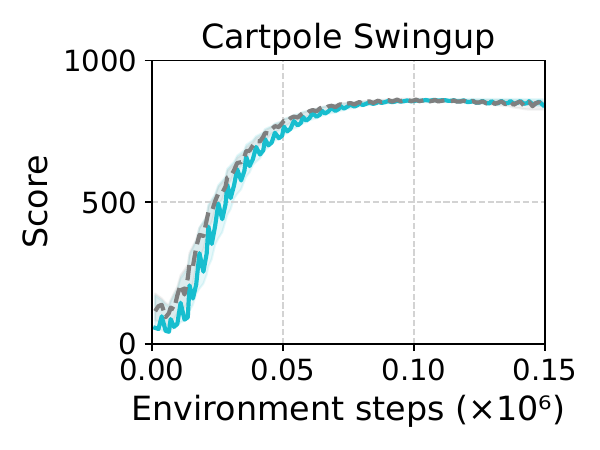}
    \end{subfigure}
    \hfill
    \begin{subfigure}[t]{0.3\textwidth}
        \includegraphics[width=\textwidth]{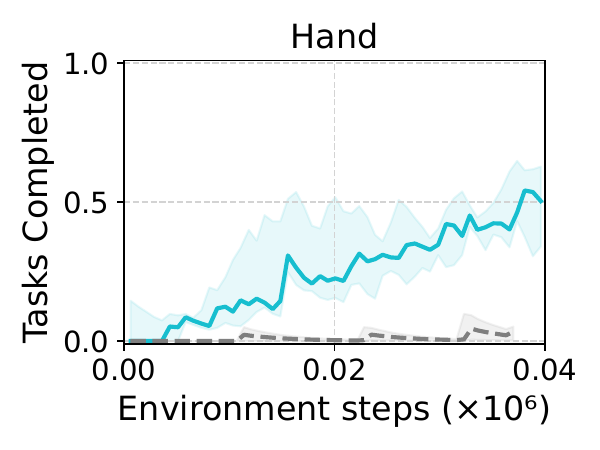}
    \end{subfigure}
    \hfill
    \begin{subfigure}[t]{0.3\textwidth}
        \includegraphics[width=\textwidth]{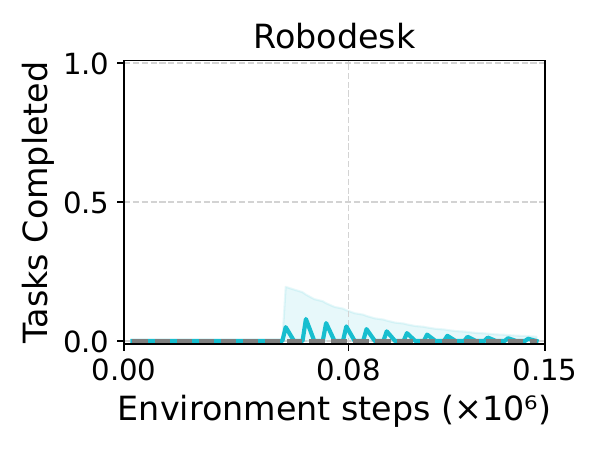}
    \end{subfigure}
    \caption{Learning curves for MBPO with DreamSmooth (Gaussian smoothing) on Hand, RoboDesk, and DMC tasks. For Hand and RoboDesk, MBPO trained from scratch could not achieve any meaningful reward signal. To ease the exploration problem, we initialize the replay buffers with demonstrations collected from the fully-trained DreamerV3 policies. The shaded regions show the minimum and maximum over 3 seeds.}
    \label{fig:mbpo_learning_curves}
\end{figure}

\clearpage

\section{Crafter}
\label{sec:crafter}

It is surprising that despite improved reward predictions when using Gaussian or uniform smoothing on the Crafter environment, the task performance significantly deteriorates compared to vanilla DreamerV3, or even DreamerV3 with EMA reward smoothing, as can be seen in \Cref{fig:main_results_crafter}.

One possible cause is the symmetrical structure of the Gaussian and Uniform kernels, which makes the smoothed rewards dependent on both past and future ground truth rewards. This means that the reward model has to anticipate future rewards when performing predictions. We suspect that this leads to a high rate of \textbf{false positives} in Crafter, where there are many sources of sparse rewards. The false positives can make policy learning difficult.

To test this, we introduce 2 asymmetric variants of the uniform smoothing kernel:

Uniform $[-\delta, 0]$, where the smoothed rewards only depend on past ground truth rewards. 
\begin{equation}
    f_i = \frac{1}{\delta+1} \quad \forall i \in [-\delta, 0].
\end{equation}
Uniform $[0, \delta]$, where the smoothed rewards only depend on future ground truth rewards. 
\begin{equation}
    f_i = \frac{1}{\delta+1} \quad \forall i \in [0, \delta].
\end{equation}

The plots of predicted and ground truth rewards in \Cref{fig:uniform-sym-crafter-trajectories} and \Cref{fig:uniform-asym-crafter-trajectories} show that the reward models trained with the symmetric uniform smoothing kernel and Uniform $[0,4]$ smoothing kernel have more false positives than those trained with Uniform $[-4,0]$ and no smoothing, predicting significant positive rewards even when ground truth reward is $0$.

In \Cref{fig:crafter_uni_learning_curve}, smoothing kernels that require the model to anticipate future rewards perform worse than those that do not: Uniform $[-4,0]$ performs the best, followed by symmetric uniform smoothing, then Uniform $[0,4]$.

\begin{figure}[ht] 
    \centering
    \begin{subfigure}[t]{0.8\textwidth}
        \includegraphics[width=\textwidth]{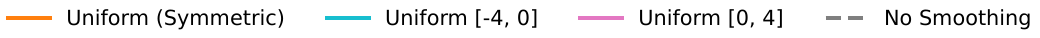}
    \end{subfigure}
    \\
    \begin{subfigure}[t]{0.5\textwidth}
        \includegraphics[width=\textwidth]{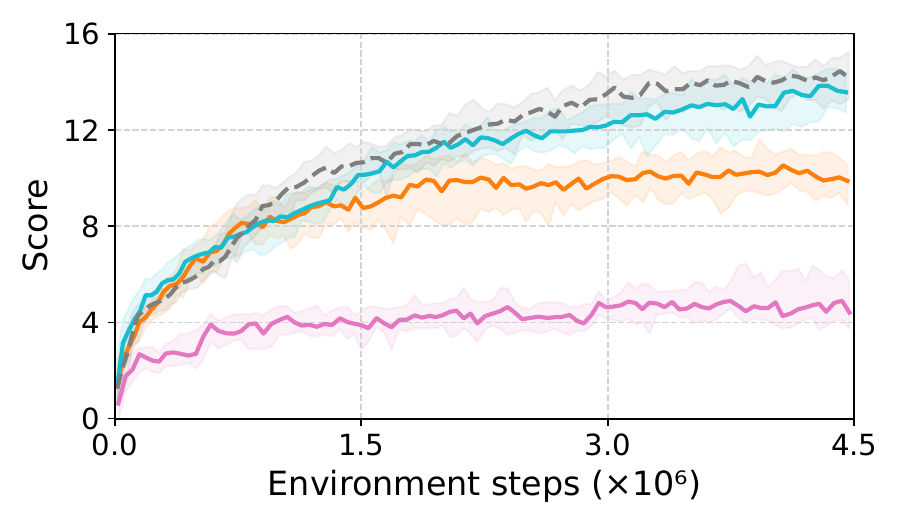}
    \end{subfigure}
    \vspace{-1em}
    \caption{Learning curves of DreamSmooth with uniform smoothing variants on Crafter. Performance on Crafter is highly correlated with the need to anticipate future reward.}
    \label{fig:crafter_uni_learning_curve}
    \vspace{-1em}
\end{figure}

\clearpage

\begin{figure}[ht]
    \centering
    \begin{subfigure}[t]{\textwidth}
        \includegraphics[width=\textwidth]{figures/trajectory_smooth_legend.pdf}
    \end{subfigure}
    \\
    \begin{subfigure}[t]{0.49\textwidth}
        \includegraphics[width=\textwidth]{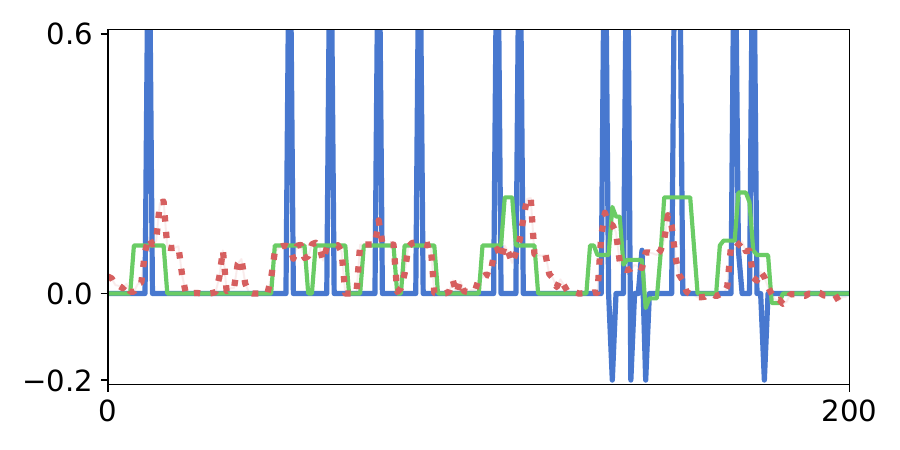}
        \vspace{-1.5em}
    \end{subfigure}
    \hfill
    \begin{subfigure}[t]{0.49\textwidth}
        \includegraphics[width=\textwidth]{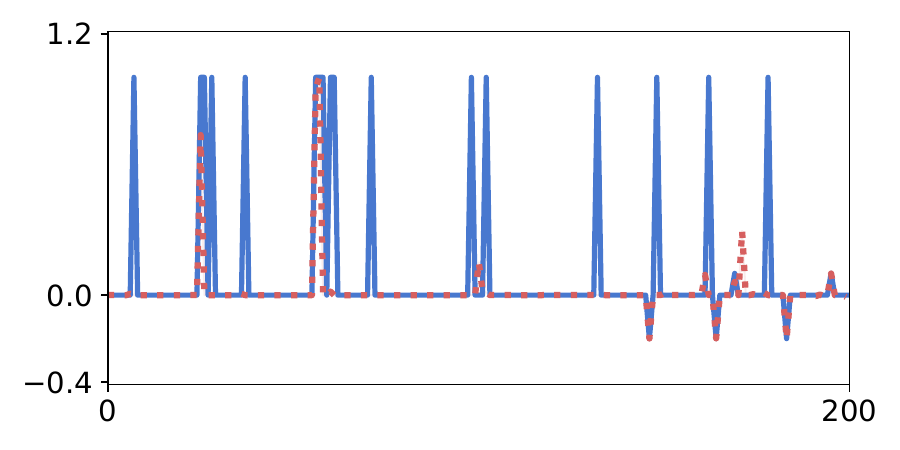}
        \vspace{-1.5em}
    \end{subfigure}
    \\
    \begin{subfigure}[t]{0.49\textwidth}
        \includegraphics[width=\textwidth]{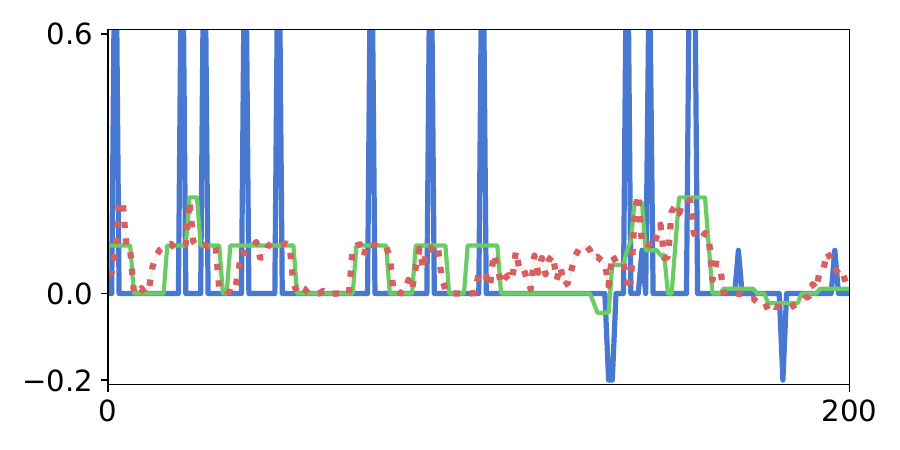}
        \vspace{-1.5em}
    \end{subfigure}
    \hfill
    \begin{subfigure}[t]{0.49\textwidth}
        \includegraphics[width=\textwidth]{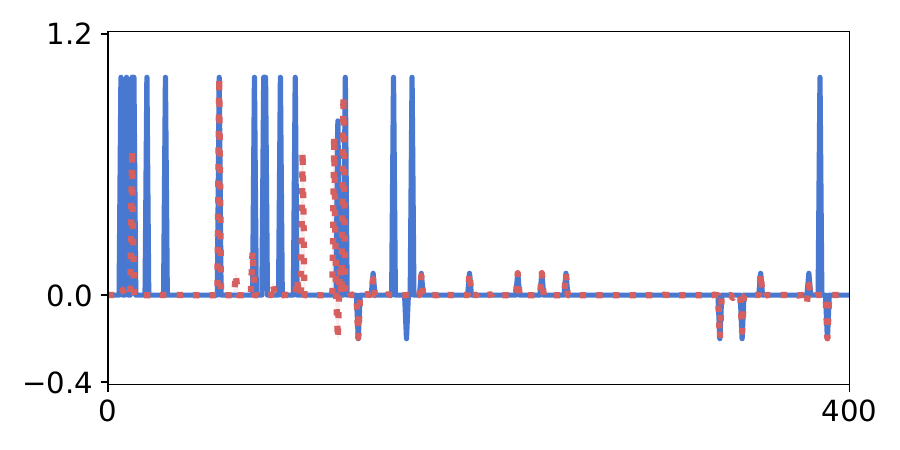}
        \vspace{-1.5em}
    \end{subfigure}
    \\
    \begin{subfigure}[t]{0.49\textwidth}
        \includegraphics[width=\textwidth]{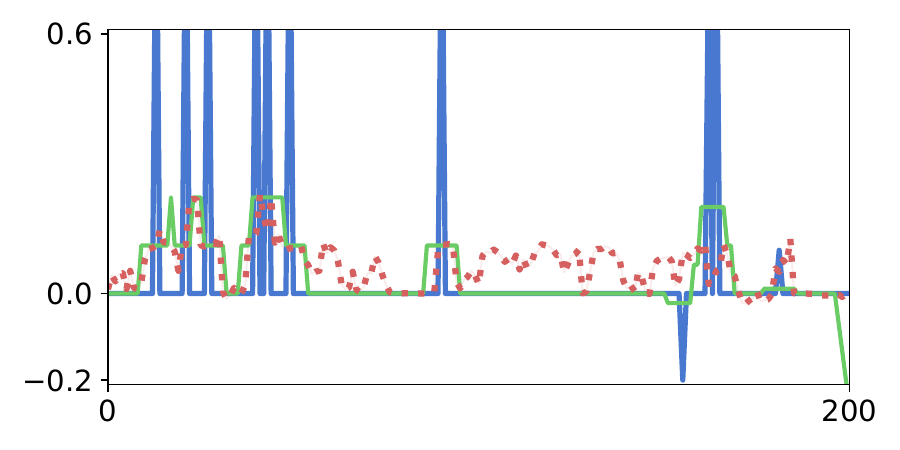}
        \vspace{-1.5em}
    \end{subfigure}
    \hfill
    \begin{subfigure}[t]{0.49\textwidth}
        \includegraphics[width=\textwidth]{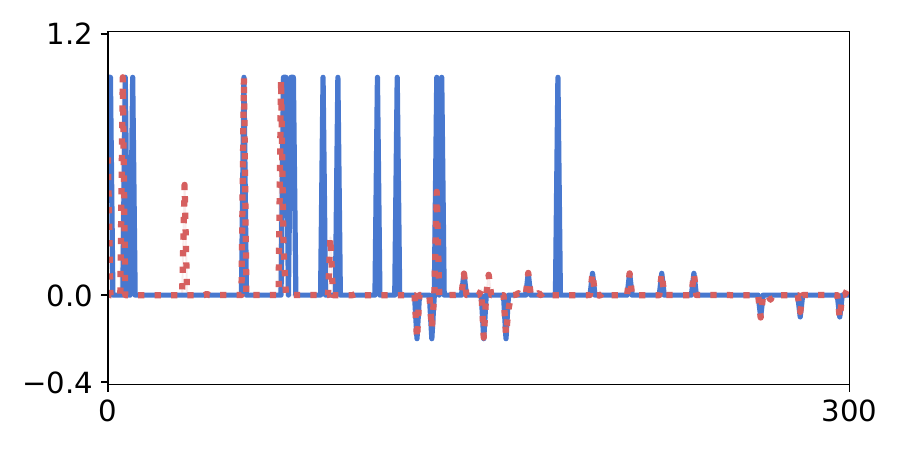}
        \vspace{-1.5em}
    \end{subfigure}
    \\
    \begin{subfigure}[t]{0.49\textwidth}
        \includegraphics[width=\textwidth]{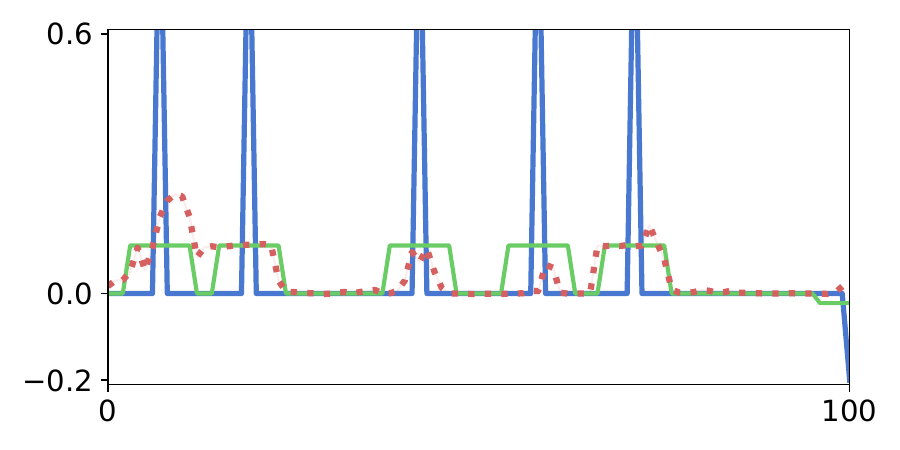}
        \vspace{-1.5em}
    \end{subfigure}
    \hfill
    \begin{subfigure}[t]{0.49\textwidth}
        \includegraphics[width=\textwidth]{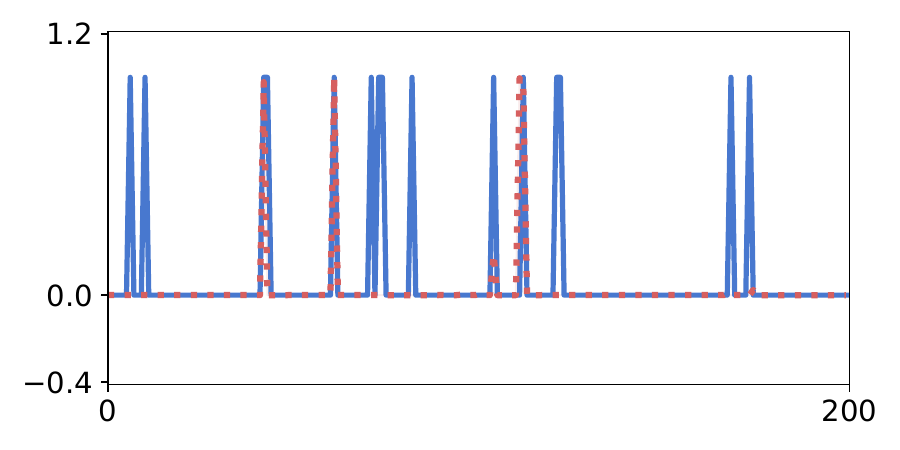}
        \vspace{-1.5em}
    \end{subfigure}
    \\
    \begin{subfigure}[t]{0.49\textwidth}
        \includegraphics[width=\textwidth]{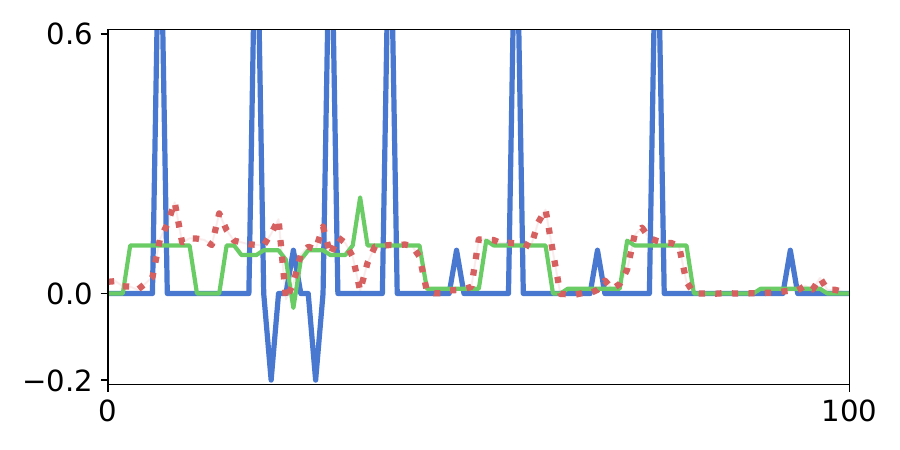}
        \vspace{-1.5em}
        \caption{Uniform $[-4,4]$ (Symmetric) Smoothing}
    \end{subfigure}
    \hfill
    \begin{subfigure}[t]{0.49\textwidth}
        \includegraphics[width=\textwidth]{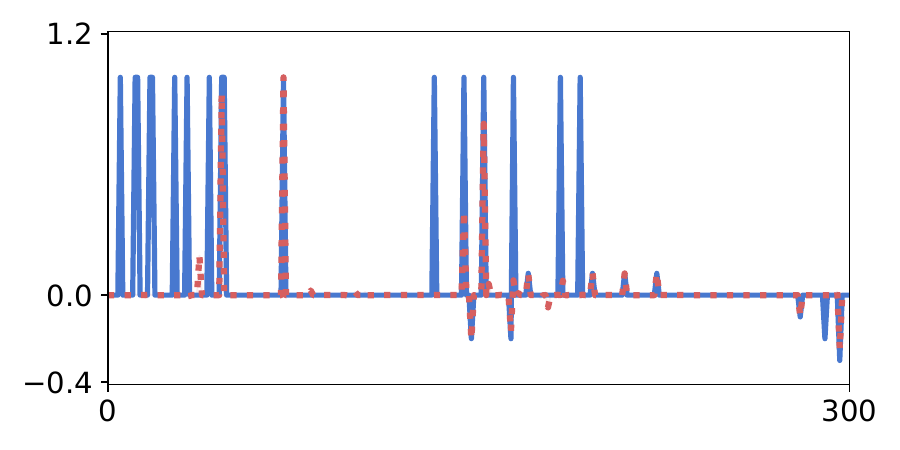}
        \vspace{-1.5em}
        \caption{No Reward Smoothing}
    \end{subfigure}
    \caption{Ground truth rewards and predicted rewards over five evaluation episodes on Crafter. Similar to Uniform $[0,4]$ smoothing, the reward model trained with symmetric uniform smoothing predicts many false positives. Meanwhile, the reward model trained without smoothing predicts few false positives but has many false negatives.}
    \label{fig:uniform-sym-crafter-trajectories}
\end{figure}

\begin{figure}[ht]
    \centering
    \begin{subfigure}[t]{\textwidth}
        \includegraphics[width=\textwidth]{figures/trajectory_smooth_legend.pdf}
    \end{subfigure}
    \\
    \begin{subfigure}[t]{0.49\textwidth}
        \includegraphics[width=\textwidth]{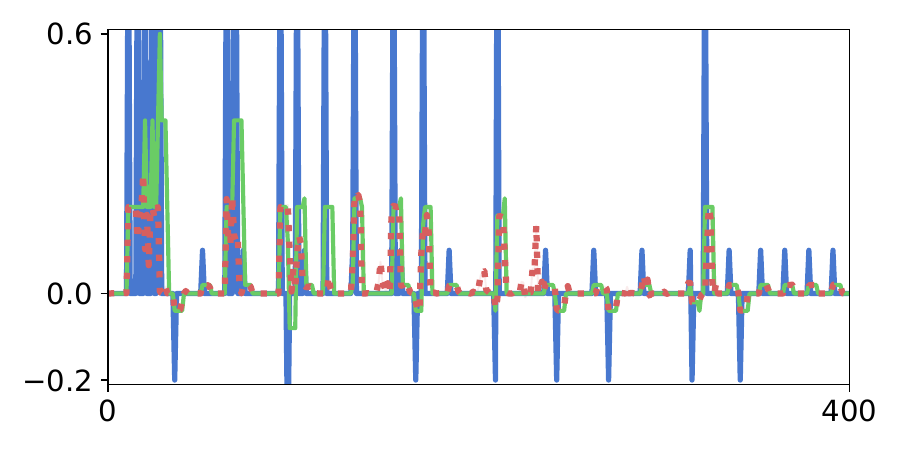}
        \vspace{-1.5em}
    \end{subfigure}
    \hfill
    \begin{subfigure}[t]{0.49\textwidth}
        \includegraphics[width=\textwidth]{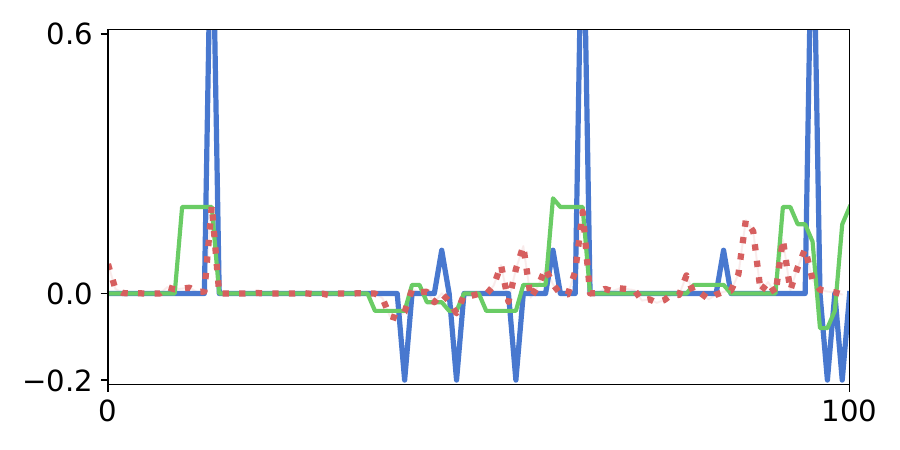}
        \vspace{-1.5em}
    \end{subfigure}
    \\
    \begin{subfigure}[t]{0.49\textwidth}
        \includegraphics[width=\textwidth]{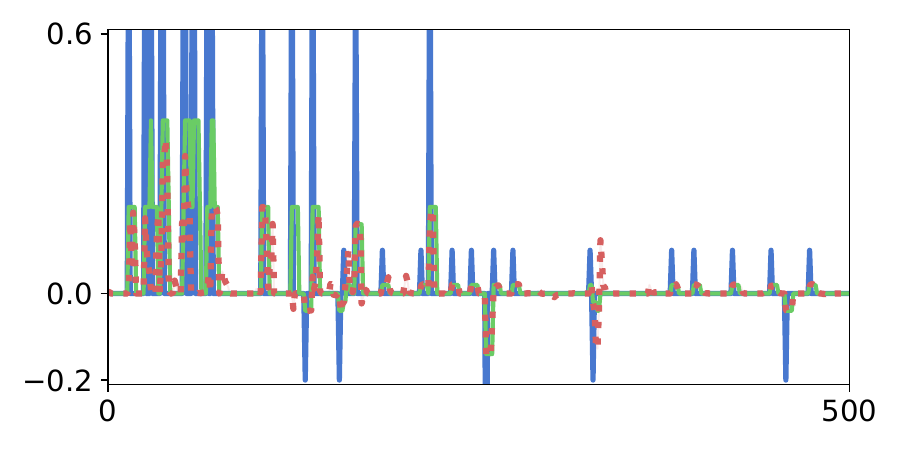}
        \vspace{-1.5em}
    \end{subfigure}
    \hfill
    \begin{subfigure}[t]{0.49\textwidth}
        \includegraphics[width=\textwidth]{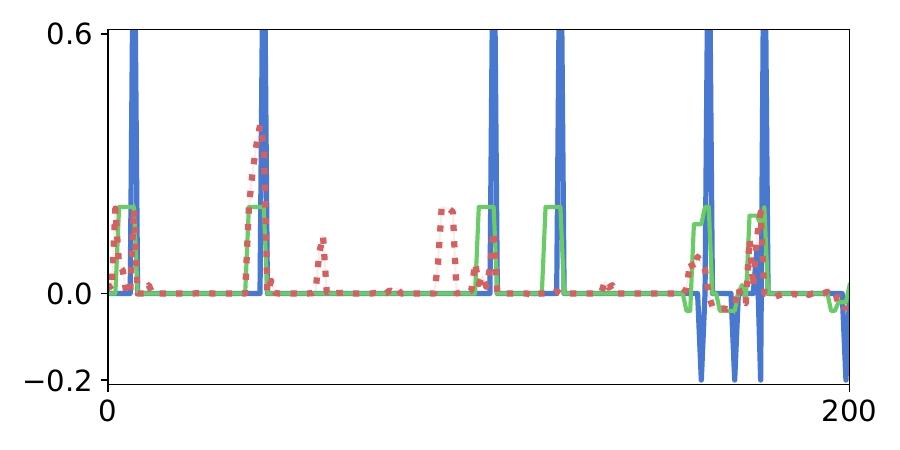}
        \vspace{-1.5em}
    \end{subfigure}
    \\
    \begin{subfigure}[t]{0.49\textwidth}
        \includegraphics[width=\textwidth]{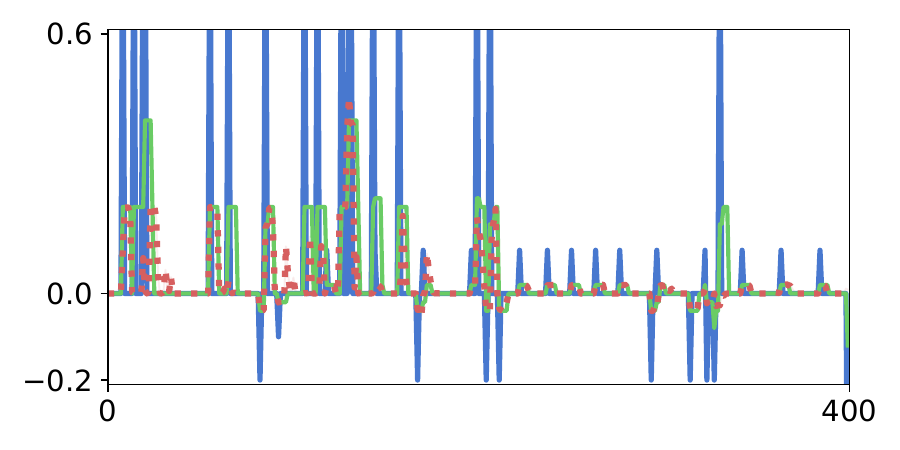}
        \vspace{-1.5em}
    \end{subfigure}
    \hfill
    \begin{subfigure}[t]{0.49\textwidth}
        \includegraphics[width=\textwidth]{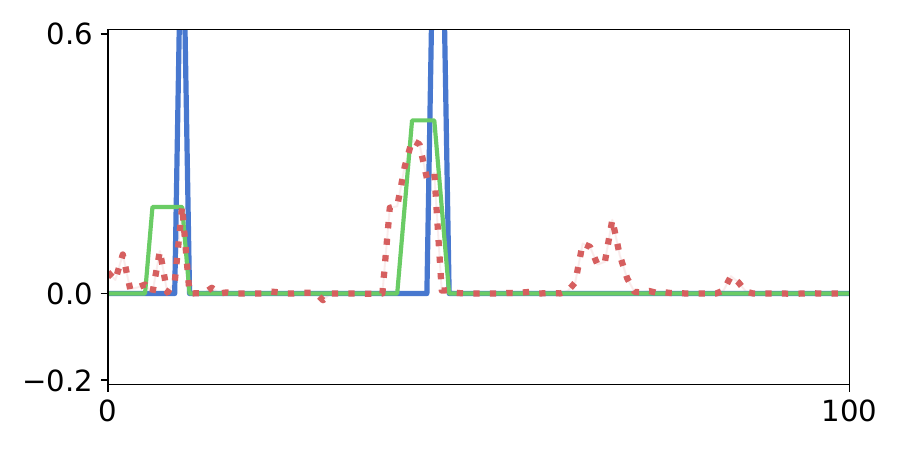}
        \vspace{-1.5em}
    \end{subfigure}
    \\
    \begin{subfigure}[t]{0.49\textwidth}
        \includegraphics[width=\textwidth]{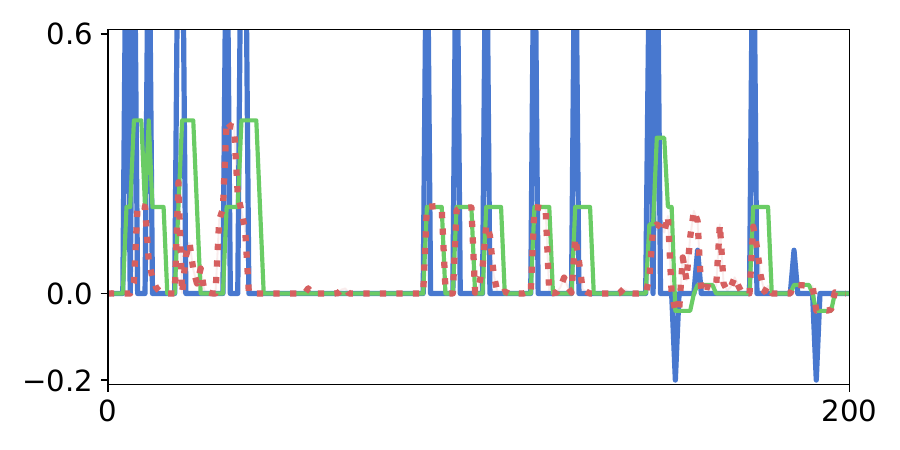}
        \vspace{-1.5em}
    \end{subfigure}
    \hfill
    \begin{subfigure}[t]{0.49\textwidth}
        \includegraphics[width=\textwidth]{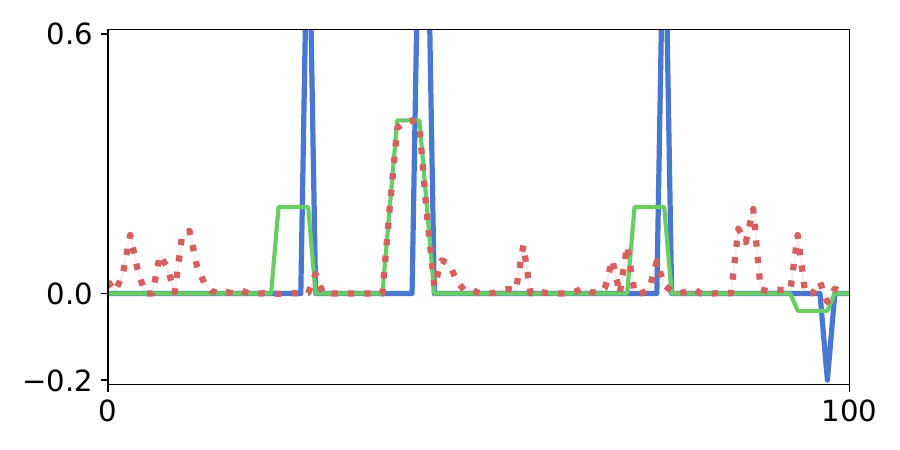}
        \vspace{-1.5em}
    \end{subfigure}
    \\
    \begin{subfigure}[t]{0.49\textwidth}
        \includegraphics[width=\textwidth]{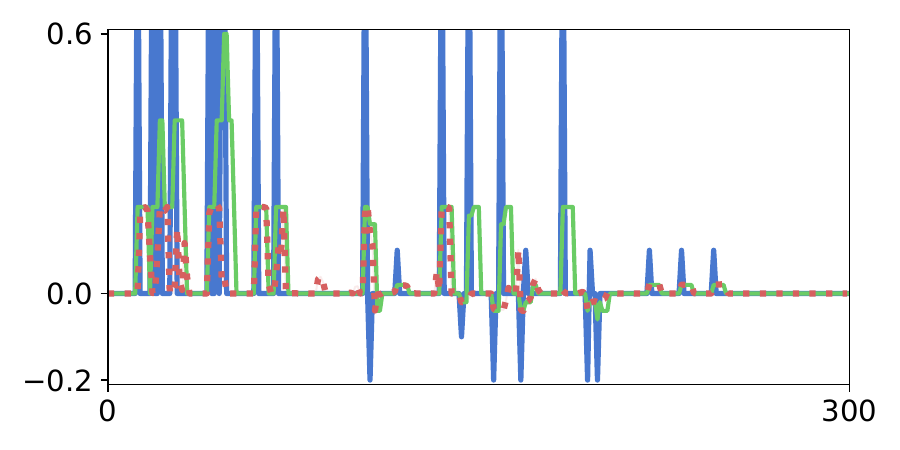}
        \vspace{-1.5em}
        \caption{Uniform $[-4,0]$ Smoothing}
    \end{subfigure}
    \hfill
    \begin{subfigure}[t]{0.49\textwidth}
        \includegraphics[width=\textwidth]{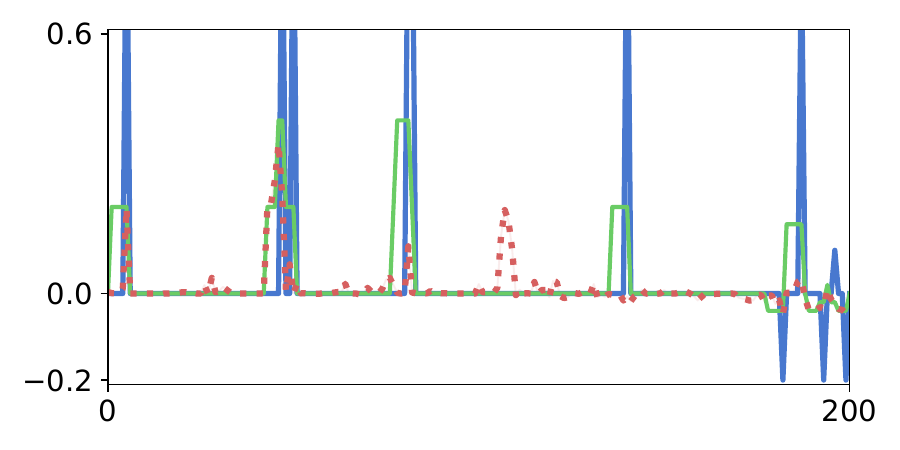}
        \vspace{-1.5em}
        \caption{Uniform $[0,4]$ Smoothing}
    \end{subfigure}
    \caption{Ground truth rewards and predicted rewards over five evaluation episodes on Crafter. With Uniform $[0,4]$ smoothing, the reward model has to anticipate future rewards, resulting in more false positives than Uniform $[-4,0]$, where the smoothed rewards at the current timestep only depend on past rewards.}
    \label{fig:uniform-asym-crafter-trajectories}
\end{figure}

\end{document}